\newtheorem{theorem}{Theorem}
\newtheorem{property}{Property}
\newtheorem{proposition}{Proposition}
\newtheorem{lemma}{Lemma}
\newtheorem{definition}{Definition}
\newcommand{\ourmethod}{\texttt{UGCL}}
\newcommand{\tabincell}[2]{\begin{tabular}{@{}#1@{}}#2\end{tabular}}
\newcommand{\yx}[1]{\textcolor{black}{#1}}
\def\BibTeX{{\rm B\kern-.05em{\sc i\kern-.025em b}\kern-.08em
    T\kern-.1667em\lower.7ex\hbox{E}\kern-.125emX}}
\begin{document}

\title{Unifying Graph Contrastive Learning\\ with Flexible Contextual Scopes}

\author{
\IEEEauthorblockN{
Yizhen Zheng\IEEEauthorrefmark{2}, Yu Zheng\IEEEauthorrefmark{3}, Xiaofei Zhou\IEEEauthorrefmark{4}, Chen Gong\IEEEauthorrefmark{6}, Vincent CS Lee\IEEEauthorrefmark{2}, Shirui Pan\IEEEauthorrefmark{5}\IEEEauthorrefmark{1}\thanks{\IEEEauthorrefmark{1}Corresponding author.}
}
\IEEEauthorblockA{
\IEEEauthorrefmark{2}
Monash University, Australia; 
\IEEEauthorrefmark{3}
La Trobe University, Australia;\\
\IEEEauthorrefmark{4} 
University of Chinese Academy of Sciences, China; \\
\IEEEauthorrefmark{6}{
Nanjing University of Science and Technology, China; }
\IEEEauthorrefmark{5}
Griffith University, Australia \\
{\textit{yizhen.zheng1@monash.edu, yu.zheng@latrobe.edu.au, zhouxiaofei@iie.ac.cn,}}\\
{\textit{chen.gong@njust.edu.cn,
vincent.cs.lee@monash.edu,
s.pan@griffith.edu.au}
}}}


\maketitle

\begin{abstract}
Graph contrastive learning (GCL) has recently emerged as an effective learning paradigm to alleviate the reliance on labelling information for \yx{graph representation learning}. The core of GCL is to maximise the mutual information between the representation of a node and its \textit{contextual representation} (i.e.,  the corresponding instance with similar semantic information) summarised from the \textit{contextual scope} (e.g., the whole graph or 1-hop neighbourhood). This scheme distils valuable self-supervision signals for GCL training. However, existing GCL methods still suffer from limitations, such as the incapacity or inconvenience in choosing a suitable contextual scope for different datasets and building biased contrastiveness. To address aforementioned problems, we present a simple self-supervised learning method \yx{termed} \underline{U}nifying \underline{G}raph \underline{C}ontrastive \underline{L}earning with Flexible Contextual Scopes (\ourmethod\ for short). Our algorithm builds flexible contextual representations with tunable contextual scopes by controlling the power of an adjacency matrix. Additionally, our method ensures contrastiveness is built within connected components to reduce the bias of contextual representations. Based on representations from both local and contextual scopes,  \ourmethod \ optimises a very simple contrastive loss function for graph representation learning. Essentially, the architecture of \ourmethod\ \textbf{can be considered as a general framework to unify existing GCL methods.} We have conducted intensive experiments and achieved new state-of-the-art performance in six out of eight benchmark datasets compared with self-supervised graph representation learning baselines. Our code has been open sourced\footnote{https://github.com/zyzisastudyreallyhardguy/UGCL}.


\end{abstract}

\begin{IEEEkeywords}
Graph Contrastive Learning, Graph Representation Learning, Self-Supervised Learning, Unsupervised learning
\end{IEEEkeywords}

\section{Introduction}
Graph neural networks (GNNs) employ a neighbourhood aggregation strategy via iterative message passing to learn low-dimensional node embeddings for permutation-invariant graphs. GNNs have achieved promising results in various graph-based tasks such as node classification \cite{jin2021multi, liu2022towards}, link prediction \cite{zhang2018link}, and graph classification \cite{zhang2018end}. They have been further applied to address various real-world problems such as anomaly detection \cite{liu2021anomaly}, graph similarity computation \cite{jin2022cgmn}, time series forcasting \cite{jin2022multivariate, jin2022neural} and trustworthy systems \cite{zhang2022trustworthy, zhang2021projective}.

\yx{The majority of} GNNs learn node representations \yx{following (semi-)supervised paradigms where supervision signals are provided from manual labels}. However, in the real world, collecting labels is an expensive and labour-intensive process. 
\yx{To address this problem, graph contrastive learning (GCL) methods are produced to alleviate the reliance on labels in graph representation learning \cite{velivckovic2018deep, peng2020graph, hassani2020contrastive, zhu2020deep, jiao2020sub,zheng2021towards, zheng2022rethinking}. } 
The key idea of \yx{GCL} methods is to maximise the \yx{mutual information (MI)} between the representation of a node and its \textit{contextual representation} (i.e., the corresponding node instance with similar semantic information) summarised from the contextual scope (e.g., 1-hop neighbourhood). In particular, aiming to extract global semantic information, global contrasting methods such as DGI \cite{velivckovic2018deep} and MVGRL \cite{hassani2020contrastive} contrast nodes with a readout graph embedding. 
Focusing on localised information, localised contrasting methods (e.g., GRACE \cite{zhu2020deep}, and GMI \cite{peng2020graph}) maximise MI between a node and its close neighbourhood or augmented counterpart.

Though GCL methods can reduce \yx{the} reliance on labelling information during training, they still share the following deficiencies\yx{:} 1) the establishment of the contextual representation requires a \textit{contextual scope}, \yx{while the} size of this scope is hard to adjust; 2) the aggregated contextual representation is biased in existing GCL methods, as they neglect the independence of connected components. 



The first limitation arises since most GCL methods only have contextual representations generated from a fixed scope. 
However, with different properties (e.g., type of edges and sparsity), datasets from various domains (e.g., citation networks and social networks) can have different suitable contextual scopes. 
\yx{For example, in} social networks, a faraway neighbour can be semantically unrelated based on the theory of six degrees of separation \cite{milgram1967small}, as all people are no more than six-hop away from each other. However, a remote neighbour can still be similar to a target node in citation networks \yx{since they share the same research field}. In addition, the sparsity of \yx{graphs} can affect the contextual scope since a sparse \yx{graph} may need a larger receptive field to include sufficient informative neighbours. Therefore, selecting a suitable contextual scope for different datasets is necessary. We have provided theoretical justification for why different graphs require different contextual scope in Section \ref{sec theoretical justification}. However, with a fixed scope, existing GCL methods 
cannot \yx{well} exploit supervision signals from the suitable scale for different datasets.


For the second limitation, some GCL methods (e.g., DGI \cite{velivckovic2018deep} and MVGRL \cite{hassani2020contrastive}) build contrastiveness between a node and a whole graph, \yx{neglecting} the fact that many graphs in practice are composed of many independent connected components (as shown in Figure~\ref{fig:connected component}). 
\yx{In general}, node embedding generation within a connected component will not be affected by other components. Thus, different connected components \yx{ought to} have their own contextual representations and can be different from each other. \yx{In this case}, contrasting to a whole graph can be biased as it neglects this independence and mixes up representations from different components, which impairs the model ability to explore fine-grained information within each connected component.

\begin{figure}
    \centering
    \includegraphics[scale = 0.55]{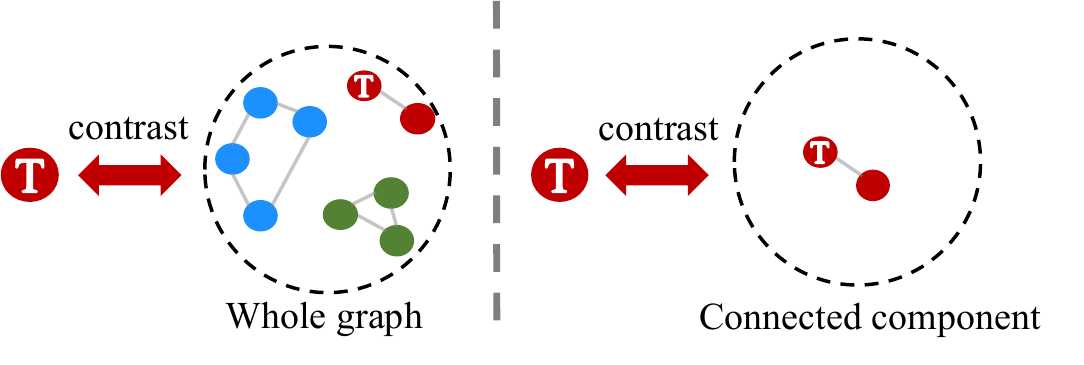}
    \caption{``T'' is the target node, and the red arrow means contrastiveness. The left part shows contrastiveness between ``T'' and a whole graph, while the right part is contrastiveness built within a connected component.}
    \label{fig:connected component}
    \vspace{-4mm}
\end{figure}

To alleviate \yx{the} aforementioned problems in GCL, we propose a new method termed \underline{U}nifying \underline{G}raph \underline{C}ontrastive \underline{L}earning with Flexible Contextual Scopes (namely \ourmethod).
The theme of our algorithm is to establish \textit{contextual representations} by tuning the power of an adjacency matrix, \yx{which flexibly expands the contextual scope based on node proximity.} This mechanism can be regarded as the aggregation which summarises the information embodied in a selected scale. Based on this idea, \ourmethod~can generate contextual representations from a suitable scale for different datasets. Additionally, as graph convolution is only effective on connected components when generating contextual representations, \ourmethod~considers the independence of connected components and reduces the bias of the generated representations by building contrastiveness within a connected component as shown in the right part of Figure~\ref{fig:connected component}. 





\vspace{2mm}
\noindent\textbf{Benefits.} \ourmethod\ is conceptually simple, easy to implement, and nicely addresses limitations of common GCL approaches. In particular, it can tune contextual scope easily \yx{and ensure} contrastiveness is conducted within connected components to reduce bias of contextual representations. 
{More significantly,} the architecture of \ourmethod~is a general framework that unifies representative GCL approaches, including localised contrasting methods and global contrasting methods.

\section{Preliminary}
\subsection{Problem Definition}
\yx{In this paper, we focus on unsupervised node representation learning problem. }
Given an attributed graph $\mathcal{G = (\textbf{X},\textbf{A})}$, where $\textbf{X} \in \mathbb{R}^{N \times D}$ is feature matrix and $\textbf{A} \in \mathbb{R}^{N \times N}$ denotes the adjacency matrix, we aim to learn a GNN encoder $g(\cdot)$ to generate  node representations without the guidance of labels. Here, $N$ is the number of nodes in $\mathcal{G}$, $D$ is the feature dimension. The output representation, i.e., $\textbf{H} = g(\textbf{X}, \textbf{A}) \in \mathbb{R}^{N \times D'}$, where $D'$ is the hidden embeddings dimension. $\textbf{H}$ can be utilised for various downstream tasks, \yx{such as} node classification \yx{and link prediction}. 

\subsection{Representations Convergence with Raising Power Theorem}
\begin{theorem}[Representations Convergence with Raising Power Theorem]
\label{theorem graph power}
Given an adjacency matrix $\normalfont{\textbf{A}}$, when $n$ increases, multiplying with the $n$-th power of $\normalfont{\textbf{A}}$, node representations $\normalfont{\textbf{H}}$ within a connected component will gradually converge to a shared subspace $\mathcal{M}$ as shown below:
\end{theorem}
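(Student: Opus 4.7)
The plan is to combine a spectral decomposition of the adjacency operator with the Perron--Frobenius theorem for irreducible nonnegative matrices. Since the conclusion is only claimed \emph{within} a connected component, I would begin by restricting to a single component, on which the adjacency matrix is irreducible. To make the limit well posed (the raw $\textbf{A}^n$ generically blows up in norm), I would read the statement in the standard graph-convolution sense, working with a normalized variant $\tilde{\textbf{A}} = \textbf{D}^{-1/2}(\textbf{A}+\textbf{I})\textbf{D}^{-1/2}$, whose spectrum lies in $(-1,1]$; the argument for a raw $\textbf{A}$ is the same up to rescaling by the Perron eigenvalue $\lambda_{1}^{n}$.

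First I would eigendecompose $\tilde{\textbf{A}} = \sum_{i=1}^{N}\lambda_{i}\,u_{i}u_{i}^{\top}$ with $|\lambda_{1}|\ge|\lambda_{2}|\ge\cdots$. Applying Perron--Frobenius to the irreducible nonnegative matrix $\tilde{\textbf{A}}$, the leading eigenvalue $\lambda_{1}=1$ is simple with a strictly positive eigenvector $u_{1}$, and the self-loop augmentation pushes the spectrum strictly above $-1$, so $|\lambda_{i}|<1$ for all $i\ge 2$. Raising to the $n$-th power then gives $\tilde{\textbf{A}}^{n} = \sum_{i}\lambda_{i}^{n}\,u_{i}u_{i}^{\top}$, and every term with $i\ge 2$ decays geometrically at rate $|\lambda_{2}|^{n}$. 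Consequently $\tilde{\textbf{A}}^{n}\textbf{H} \to u_{1}u_{1}^{\top}\textbf{H}$, so each row of the limit is a scalar multiple of the common vector $u_{1}^{\top}\textbf{H}$; equivalently, every node embedding collapses into the one-dimensional shared subspace $\mathcal{M}=\mathrm{span}(u_{1}^{\top}\textbf{H})$, with convergence rate controlled by the spectral gap $1-|\lambda_{2}|$.

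The main obstacle is interpreting ``convergence'' correctly. With the unnormalized $\textbf{A}$ the spectral radius typically exceeds one, so $\textbf{A}^{n}\textbf{H}$ diverges in norm and the statement must be read either after dividing by $\lambda_{1}^{n}$ or directionally, as the subspace spanned by the rows of $\textbf{A}^{n}\textbf{H}$. A closely related subtlety is the bipartite case, where the unaugmented normalized adjacency carries $-1$ as an eigenvalue and destroys convergence; adding self-loops (the GCN convention) resolves this cleanly and is what I would assume throughout. Once these conventions are fixed, the remaining work is bookkeeping: isolating one connected component, invoking Perron--Frobenius, and reading off the limit from the spectral expansion.
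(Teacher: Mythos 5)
Your proof follows essentially the same spectral argument as the paper, but packages the eigenvalue facts differently. The paper works with $\hat{\textbf{A}} = \hat{\textbf{D}}^{-1/2}(\textbf{A}+\textbf{I})\hat{\textbf{D}}^{-1/2}$ where $\hat{\textbf{D}} = \textbf{D}+\textbf{I}$, imports from Oono and Suzuki (Lemma~\ref{lemma norm}) that the top eigenvalue is $1$ with multiplicity $K$ equal to the number of connected components, that the next eigenvalue is strictly below $1$ and the smallest strictly above $-1$, and then uses a projection onto the orthogonal complement of the leading eigenspace (Lemma~\ref{lemma orth}) to get the one-step contraction $d_{\mathcal{M}}(\hat{\textbf{A}}^n \textbf{H}) \leq \lambda\, d_{\mathcal{M}}(\hat{\textbf{A}}^{n-1} \textbf{H})$ — which is the theorem's actual displayed claim. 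You instead restrict to a single component, invoke Perron--Frobenius (with aperiodicity supplied by the self-loops) to get simplicity of $\lambda_1=1$ and $|\lambda_i|<1$ for $i\ge 2$, and read off the limit $\tilde{\textbf{A}}^n\textbf{H}\to u_1u_1^\top\textbf{H}$. The spectral gap is doing the work in both; yours is more self-contained and more elementary (deriving the eigenvalue structure rather than citing it), while the paper's formulation treats all $K$ components at once and yields the contraction inequality directly. To match the theorem's statement you should extract that inequality explicitly from your expansion: the residual $\tilde{\textbf{A}}^n\textbf{H} - u_1u_1^\top\textbf{H} = \sum_{i\ge 2}\lambda_i^n\, u_iu_i^\top\textbf{H}$ has Frobenius norm shrinking by at least a factor $|\lambda_2|$ per step.

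Two small corrections. The normalization should use $\hat{\textbf{D}}=\textbf{D}+\textbf{I}$, i.e.\ $\hat{\textbf{D}}^{-1/2}(\textbf{A}+\textbf{I})\hat{\textbf{D}}^{-1/2}$, not $\textbf{D}^{-1/2}(\textbf{A}+\textbf{I})\textbf{D}^{-1/2}$; with plain $\textbf{D}$ the spectrum need not lie in $(-1,1]$. And the target subspace is defined independently of $\textbf{H}$: on a single component it is $\{u_1 m : m\in\mathbb{R}^{1\times D}\}$, the $N\times D$ matrices whose rows are all proportional to $u_1$, not $\mathrm{span}(u_1^\top\textbf{H})$. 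Your geometric picture is right, but $\mathcal{M}$ is a fixed subspace determined by $\hat{\textbf{A}}$, which is what makes the contraction statement well posed.
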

\begin{equation}
    \rm d_{\mathcal{M}}(\textbf{A}^{n}\textbf{H}) \leq \lambda d_{\mathcal{M}}(\textbf{A}^{n-1}\textbf{H}),
\end{equation}
\noindent where the definition of subspace $\mathcal{M}$ and the distance of graph representation to $\mathcal{M}$ (i.e., $d_{\mathcal{M}}$), are defined in Appendix.
$\lambda$ is the second largest eigenvalue of $\textbf{A}$. The computation of $\textbf{A}^n$ is formulated as $\textbf{A}^n = \underbrace{\textbf{A}\textbf{A} \cdots \textbf{A}}_{n}$.

This theorem shows that as $n$ increases, node representations will converge to a subspace $\mathcal{M}$ as $\lambda$ is guaranteed to be smaller than 1 (as shown in Lemma \ref{lemma norm} in Appendix). Thus, with sufficiently large $n$-th power for $\textbf{A}$, the generated contextual node representation can summarise all node embeddings within a connected components. This is because it encodes information of the whole graph. The detailed proof of this theorem is presented in Appendix. 

\subsection{Contextual Homophily Rate \& Graph Sparsity}
Here, we define the contextual homophily rate $\mathcal{P}_n(i)$ to evaluate the homophily rate (i.e., rate of neighbours sharing the same label as the target node) in the contextual scope and graph sparsity $T_{\mathcal{G}}$.
\begin{definition}[Contextual Hompohily Rate]
The contextual homophily rate of a node $i$ is $\mathcal{P}_n$(i), where $n$ means its contextual scope is based on $n$-th power of $\textbf{A}$ (i.e., $n$-hop neighbourhood):
\begin{equation}
    \mathcal{P}_n(i) = |\frac{j \in \mathcal{N}_n(i) \wedge y_i = y_j}{j \in \mathcal{N}_n(i)}|,
\end{equation}
where $y$ is label for a node, and $\mathcal{N}_n(i)$ is the neighbourhood for node $i$ with $n$-th power of $\textbf{A}$, i.e., n-hop neighbourhood. 
\end{definition}

\begin{definition}[Graph Sparsity]
The sparsity $T_{\mathcal{G}}$ of a given graph $\mathcal{G}$ is:
\begin{equation}
T_{\mathcal{G}} = \frac{E}{N \times N} \approx \frac{N \times d}{N \times N} ,
\end{equation}
where $d$ is the average degree of nodes in $\mathcal{G}$, while $E$ and $N$ represent the number of edges and nodes in $\mathcal{G}$ respectively.
\label{sparsity}
\end{definition}

\section{Method}

\yx{In this section, we introduce the proposed \ourmethod~which learns node representations via the power adjustment of $\textbf{A}$ in a self-supervised fashion.} 
The overall architecture of our method is illustrated in Figure~\ref{fig:overall}. To train our model, we first create two views: patch- and contextual view, where the latter view is generated with the $n$-th power of $\textbf{A}$. Then, we construct a cross-view contrastiveness in a pair-wise contextual relationship between these two views. The following sections illustrate the details of view establishment and the cross-view contrastiveness of \ourmethod. 

\begin{figure*}[h]
\centering
\includegraphics[scale = 0.45]{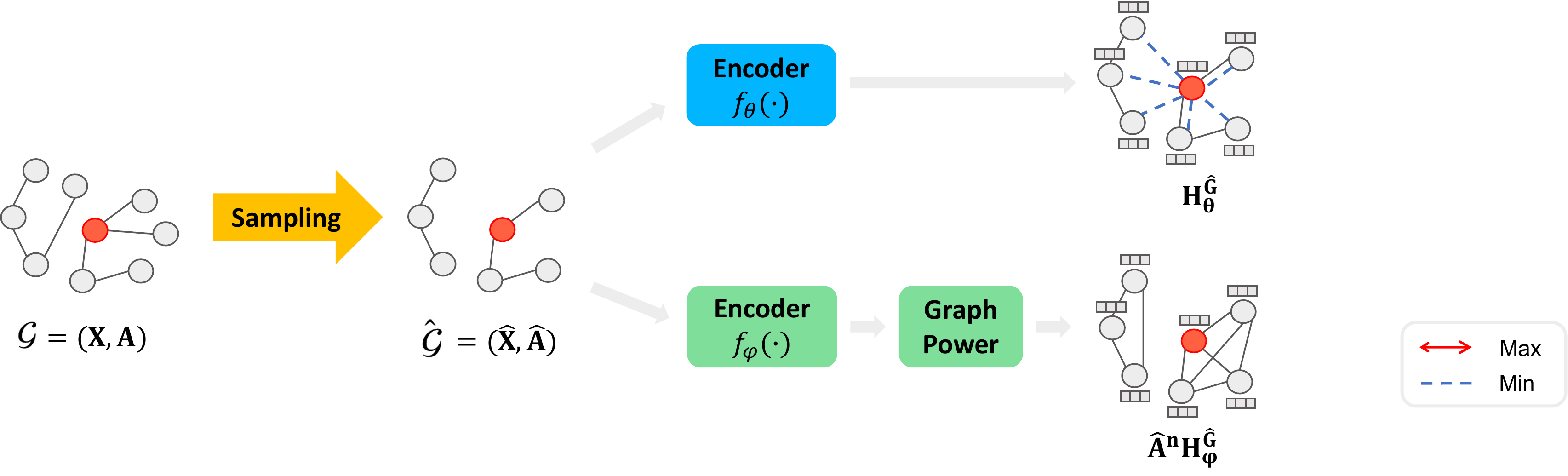}
\caption{The overall architecture of \ourmethod. Firstly, we conduct subsampling on $\mathcal{G} = (\textbf{X},\textbf{A})$ to construct its subgraph $\hat{\mathcal{G}}$, whose size is predefined by a parameter $S$, 
Then, $\hat{\mathcal{G}}$ is fed into two GNN-based encoders, the primary encoder $f_{\theta}(\cdot)$ and the auxiliary encoder $f_{\varphi}(\cdot)$, to generates node representations $\textbf{H}^{\hat{\mathcal{G}}}_{\theta}$, and 
$\textbf{H}^{\hat{\mathcal{G}}}_{\varphi}$, respectively. After that, $\textbf{H}^{\hat{\mathcal{G}}}_{\varphi}$ is multiplied with $n$-th power of $\hat{\textbf{A}}$ to obtain contextual representations. Finally, a contrastive learning scheme is deployed, where the red solid line indicates MI maximisation, while multiple blue dash lines represent the opposite operation.}
\label{fig:overall}
\vspace{-0.6cm}
\end{figure*}


\subsection{View Establishment}
\label{view establisment}
In our method, the view establishment process generates two views (i.e., patch view and contextual view), based on which a cross-view contrastive learning scheme is employed to compute contrastive loss. As shown in Figure \ref{fig:overall}, a subgraph $\hat{\mathcal{G}} = (\hat{\textbf{X}}, \hat{\textbf{A}})$ is sampled from $\mathcal{G}$ 
and fed into two GNN encoders, the main encoder $f_{\theta}(\cdot)$ and the auxiliary encoder $f_{\varphi}(\cdot)$, to get two variants of node representations $\textbf{H}^{\hat{\mathcal{G}}}_{\theta}$ and $\textbf{H}^{\hat{\mathcal{G}}}_{\varphi}$ for $\hat{\mathcal{G}}$. In our experiment, we adopt a one-layer GCN as the GNN encoder. The details of the subsampling process is presented in the subsection below.
Here, we consider $\textbf{H}^{\hat{\mathcal{G}}}_{\theta}$ as the patch view representation. To obtain the contextual view representation, we first compute the $n$-th power of $\hat{\textbf{A}}$ and multiply the output with $\textbf{H}^{\hat{\mathcal{G}}}_{\varphi}$. This process can be formulated as follows:
\vspace{-2mm}
\begin{equation}\label{eq:graph_power}
    \tilde{\textbf{H}}^{\hat{G}}_{\varphi} = \hat{\textbf{A}}^n \textbf{H}^{\hat{G}}_{\varphi},
\end{equation}
\vspace{-6mm}

\noindent where $\tilde{\textbf{H}}^{\hat{\mathcal{G}}}_{\varphi}$ is the contextual view representation, and $n$ is a tunable parameter. It is worth noting that the computation of $\hat{\textbf{A}}^n$ can be easily relieved with sub-sampling and matrix multiplication decomposition. The computation time of this power mechanism is less than or around 1 millisecond for five datasets of various sizes (as shown in Table \ref{graph power}). In our proposed method, $n$ is a key parameter to control the contextual scope of contextual representation $\tilde{\textbf{H}}^{\hat{\mathcal{G}}}_{\varphi}$. As the $n$-th power of $\textbf{A}$ gives the number of paths of length $n$ between two nodes, two vertices are adjacent if the distance between these two vertices is less than or equal to $n$ \cite{balakrishnan2012textbook}.
Therefore, by multiplying $\hat{\textbf{A}}^n$ with $\textbf{H}^{\hat{\mathcal{G}}}_{\varphi}$, the contextual scope of contextual representations can be extended to $n$-hop neighbourhood. 

\noindent\textbf{Subsampling}. 
\yx{We adopt a very simple yet effective subsampling process for data augmentation. }
Specifically, we randomly pick a preset number of nodes and their edges to form a subgraph for training in each training epoch. The advantages of this approach are two folds: preserving essential properties (i.e., the sparsity $T_{\mathcal{G}}$ and the contextual homophily rate of $\textbf{A}^1$, $\mathcal{P}_1(i)$) of the given graph $\mathcal{G}$, and building diversified subgraphs with trivial computation and sufficient randomness.

To prove the first advantage, we propose the following proposition: 
\begin{proposition}
Given a Graph $\mathcal{G}$ with $d$ average node degree and $\mathcal{P}_1(i)$ homophily rate for the first power of $\textbf{A}$, its sampled graph $\hat{\mathcal{G}}$ still have similar sparsity $T_{\hat{\mathcal{G}}}$ and $\mathcal{P}_1(i)$ as $\mathcal{G}$.

\end{proposition}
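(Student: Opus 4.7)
The plan is to interpret the sampling as drawing $S$ vertices uniformly at random without replacement from $\mathcal{G}$ and taking the induced subgraph, and then to verify both preservation claims by an expectation calculation followed by a concentration remark.

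First I would handle sparsity. An edge $(u,v)\in\mathcal{E}(\mathcal{G})$ survives in $\hat{\mathcal{G}}$ iff both endpoints are sampled, which happens with probability $\frac{S(S-1)}{N(N-1)}$. By linearity of expectation,
\begin{equation}
\mathbb{E}[\hat{E}]=E\cdot\frac{S(S-1)}{N(N-1)},\qquad \mathbb{E}[T_{\hat{\mathcal{G}}}]=\frac{\mathbb{E}[\hat{E}]}{S^{2}}=\frac{E}{N^{2}}\cdot\frac{S-1}{S}\cdot\frac{N}{N-1},
\end{equation}
so $\mathbb{E}[T_{\hat{\mathcal{G}}}]\to T_{\mathcal{G}}$ as $S,N$ grow. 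Equivalently, the expected degree of a sampled node is $d\cdot(S-1)/(N-1)$, which remains on the order of $d\cdot S/N$ and recovers $d/N\approx T_{\mathcal{G}}$ after dividing by $S$. A short variance bound via Chebyshev (or Hoeffding-Serfling for sampling without replacement) promotes this to a high-probability closeness of $T_{\hat{\mathcal{G}}}$ to $T_{\mathcal{G}}$.

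Second I would handle the first-power homophily rate. Fix a node $i$ that lands in the sample, let $k_i=|\mathcal{N}_1(i)|$ and let $m_i$ be the number of its neighbours sharing label $y_i$, so that $\mathcal{P}_1(i)=m_i/k_i$. Conditional on $i$ being in the sample, every other vertex, in particular every neighbour of $i$, is independently drawn into the subgraph with the same probability $(S-1)/(N-1)$, regardless of its label. Hence the expected retained-neighbour count is $k_i(S-1)/(N-1)$ and the expected retained same-label count is $m_i(S-1)/(N-1)$, and their ratio is exactly $\mathcal{P}_1(i)$. I would then note that both counts are sums of negatively correlated Bernoullis, apply Hoeffding/Serfling to obtain an $O(1/\sqrt{k_iS/N})$ deviation for each, and use a standard ratio-of-random-variables argument to transfer this into concentration of $\hat m_i/\hat k_i$ about $\mathcal{P}_1(i)$.

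I expect the main obstacle to be the low-degree regime. When $k_i$ or $S/N$ is small, the sampled ratio $\hat m_i/\hat k_i$ can fluctuate sharply and is not even defined when no neighbour of $i$ is retained. The cleanest remedy is to phrase the homophily conclusion at the expectation level (or equivalently at the graph-average level, where averaging over many nodes smooths the noise), or to restrict attention to nodes whose expected retained degree exceeds a mild threshold; this way the elementary expectation identity above is the honest content of the claim. Sparsity poses no analogous difficulty because $\hat{E}$ is a single global sum that concentrates cleanly, so essentially all of the proof's delicacy lives in the homophily step.
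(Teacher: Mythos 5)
Your proof takes essentially the same route as the paper's, namely an informal expectation calculation showing that uniform node subsampling rescales edge counts by roughly $(S/N)^2$ and rescales both the retained-neighbour count and the retained same-label count of a node by roughly $S/N$, so that the sparsity ratio and the homophily ratio are both preserved. The paper's version is considerably more heuristic: it writes each sampled node as having $d\cdot S/N$ retained neighbours and divides to get $T_{\hat{\mathcal{G}}}\approx d/N$, and for homophily it simply asserts that since the surviving edges come from $\mathcal{G}$, the same fraction of them must be homophilic. You make this precise with the exact hypergeometric probabilities $(S-1)/(N-1)$, observe that the expectation identity gives the claim, and then add two things the paper omits entirely: a concentration remark (Chebyshev/Hoeffding--Serfling) to promote the expectation statement to a high-probability one, and an honest flag that the ratio $\hat m_i/\hat k_i$ is ill-defined or noisy for low-degree nodes, so the homophily claim is only clean at the expectation or graph-average level. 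That caveat is a real gap in the paper's one-sentence homophily argument, and your proposal is the more defensible version of the same idea.
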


\begin{proof}
    As the sparsity of $\mathcal{G}$, $T_{\mathcal{G}}$, equals to $\frac{E}{N \times N}$ and $E \approx N \times d$, we can derive that $T_{\mathcal{G}} \approx \frac{N \times d}{N \times N} = \frac{d}{N}$. After sampling, we can obtain a subgraph $\hat{\mathcal{G}}$, which has $S$ nodes. For each node, it would have $S \times d \times \frac{S}{N}$ neighbours, i.e., edges. Thus, the sparsity of the subgraph $\hat{\mathcal{G}}$, $T_{\hat{\mathcal{G}}} \approx \frac{S \times d \times \frac{S}{N}}{S \times S} = \frac{d}{N}$. As $T_{\mathcal{G}}$ is approximately equal to $T_{\hat{\mathcal{G}}}$, we show that the sparsity of $\mathcal{G}$ and $\hat{\mathcal{G}}$ is similar. In addition, as edges in $\hat{\mathcal{G}}$ come from the original graph $\mathcal{G}$, they still connect approximately the same ratio ($\mathcal{P}_1(i)$) of homophilic neighbours (i.e., nodes sharing the same label as $i$). Here, we prove the above proposition.
\end{proof}

The second advantage alleviates the reliance on the fixed graph during model training. As we sample a subgraph in each training epoch, the sampled subgraph is changing instead of in the static state. As a result, the model has to be versatile to handle the contrastiveness built with changing topology. This can be regarded as an augmentation to increase the difficulty of the self-supervised pre-text tasks, which may improve the model performance. We have conducted an ablation study in Section \ref{ablation study section} to show its effectiveness.

\subsection{Cross-View Contrastive Learning}
In \ourmethod, the cross-view contrastive learning scheme consists of two contrastive paths, which are the patch-view contrastive path and the cross-view contrastive path. In the same view, the first path distinguishes an anchor node embedding from other node embeddings, which are considered negative samples. The latter path simply maximises the cosine similarity between a patch view representation and its corresponding contextual representation (i.e., positive samples). By combining both paths,
we can form the contrastive learning objective.

\yx{As shown in Figure \ref{fig:overall}}, after processing $\hat{\mathcal{G}}$ into the primary GNN encoder $f_{\theta}(\cdot)$, we can get the patch view representation $\textbf{H}^{\hat{\mathcal{G}}}_{\theta}$. Within the patch view, giving the set of nodes $V$ in $\hat{\mathcal{G}}$ and an anchor node $v \in V$, we define that all nodes except for the anchor node as negative samples to regularise the contrastive loss via MI minimisation. 
In addition, we define an anchor node representation in the patch view $h_v \in \textbf{H}^{\hat{\mathcal{G}}}_{\theta}$ and contextual view $\tilde{h}_v \in \tilde{\textbf{H}}^{\hat{\mathcal{G}}}_{\varphi}$ as the positive pair (red line in Figure \ref{fig:overall}). By discriminating representations in the positive pair, our model can distil self-supervision signals from the chosen contextual scope. The contrastive loss function can be formulated as follows:
\vspace{-1mm}
\begin{equation}\label{eq:Loss}
     \mathcal{L} = -\frac{1}{S}\sum^S_{v \in V} \log \frac{e^{\cos(\textit{h}_v, \tilde{\textit{h}}_v)}}{\sum^N_{u \in V;u\neq v}e^{\cos(\textit{h}_v, \textit{h}_u)}},
\end{equation}

\noindent where $\cos(\cdot)$ is the cosine similarity function, $S$ represents the number of nodes in the sampled graph, $h_v$ and $\tilde{h}_v$ denote the anchor node patch- and contextual representation respectively.



\subsection{Model Training}
To train our model end-to-end, we leverage the loss $\mathcal{L}$ defined in Equation (\ref{eq:Loss}). The training objective is to minimise $\mathcal{L}$ during the optimisation. To obtain the output embeddings for downstream tasks, we first generate $\textbf{H}^\mathcal{G}_{\theta}$ with the trained GNN encoder $g_{\theta}$ and $\tilde{\textbf{H}}^\mathcal{G}_{\theta}$ by multiplying $\textbf{H}^\mathcal{G}_{\theta}$ with $\textbf{A}^n$. Finally, we aggregate these two representations: $\textbf{H} = \textbf{H}^\mathcal{G}_{\theta} + \tilde{\textbf{H}}^\mathcal{G}_{\theta}$ to get the final representations. 

\section{Unifying Representative GCL Methods}
\yx{To learn node representations in a self-supervised manner, GCL methods usually inject contrastiveness between patch view and contextual view \cite{liu2021graph}. }  
 In Figure \ref{fig: unify}, we present the general architecture of \ourmethod, which can be considered as a unified framework of GCL methods. From the framework, GCL generally follows three steps: augmentation, graph encoding and contrasting. Specifically, the optional first step is applying augmentation to the original graph $\mathcal{G} = (\textbf{X}, \textbf{A})$ to create semantically similar graph instances $\hat{\mathcal{G}}$. 
In \ourmethod, we regard subsampling as augmentation. Then, the graph encoder $g(\cdot)$ generates node representations $\textbf{H}^{\hat{\mathcal{G}}}_g = g(\hat{\mathcal{G}})$. To train $g(\cdot)$, contrastiveness is built between node representations and their corresponding contextual representations to acquire self-supervision signals via MI maximisation:
\begin{equation}
    g^{*}(\cdot) = \mathop{\arg\max}\limits_{g} \textbf{MI}(\textbf{H}^{\hat{\mathcal{G}}}_{g}, \textbf{A}^n\textbf{H}^{\hat{\mathcal{G}}}_{g}),
    \vspace{-2mm}
\end{equation}
where $g^{*}(\cdot)$ is the trained encoder, $\textbf{MI}(\cdot)$ is a mutual information neural estimator \cite{belghazi2018mutual} consisting of discriminative network (i.e., bilinear transformation or cosine similarity) and contrastive loss, $\textbf{H}^{\hat{\mathcal{G}}}_{g}$ and $\textbf{A}^n\textbf{H}^{\hat{\mathcal{G}}}_{g}$ represent node- and contextual representations respectively. In the following sections, we interprets four representative GCL methods of two categories (i.e., localised- and global contrasting methods) with the proposed unified framework.

\begin{figure}
    \centering
   \includegraphics[height=0.25\textwidth]{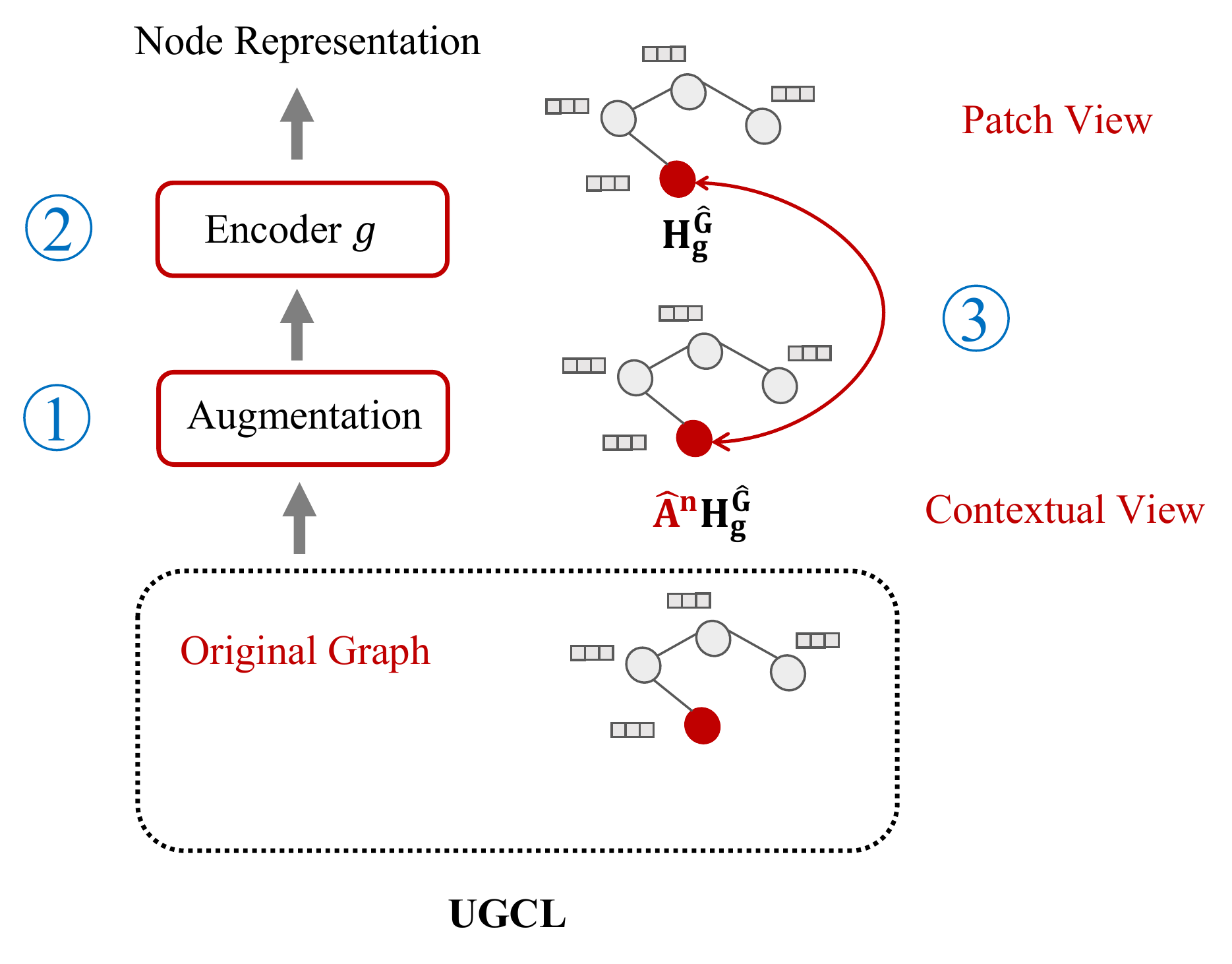}
    \caption{A unified framework for GCL methods. The red line indicates contrastiveness.}
    \label{fig: unify}
    \vspace{-6mm}
\end{figure}

\begin{figure*}[t]
 \centering
 \vspace{-0.4cm}
 \hspace{-0.2cm}
 \subfigure[Localised contrasting methods]{
  \includegraphics[height=0.25\textwidth]{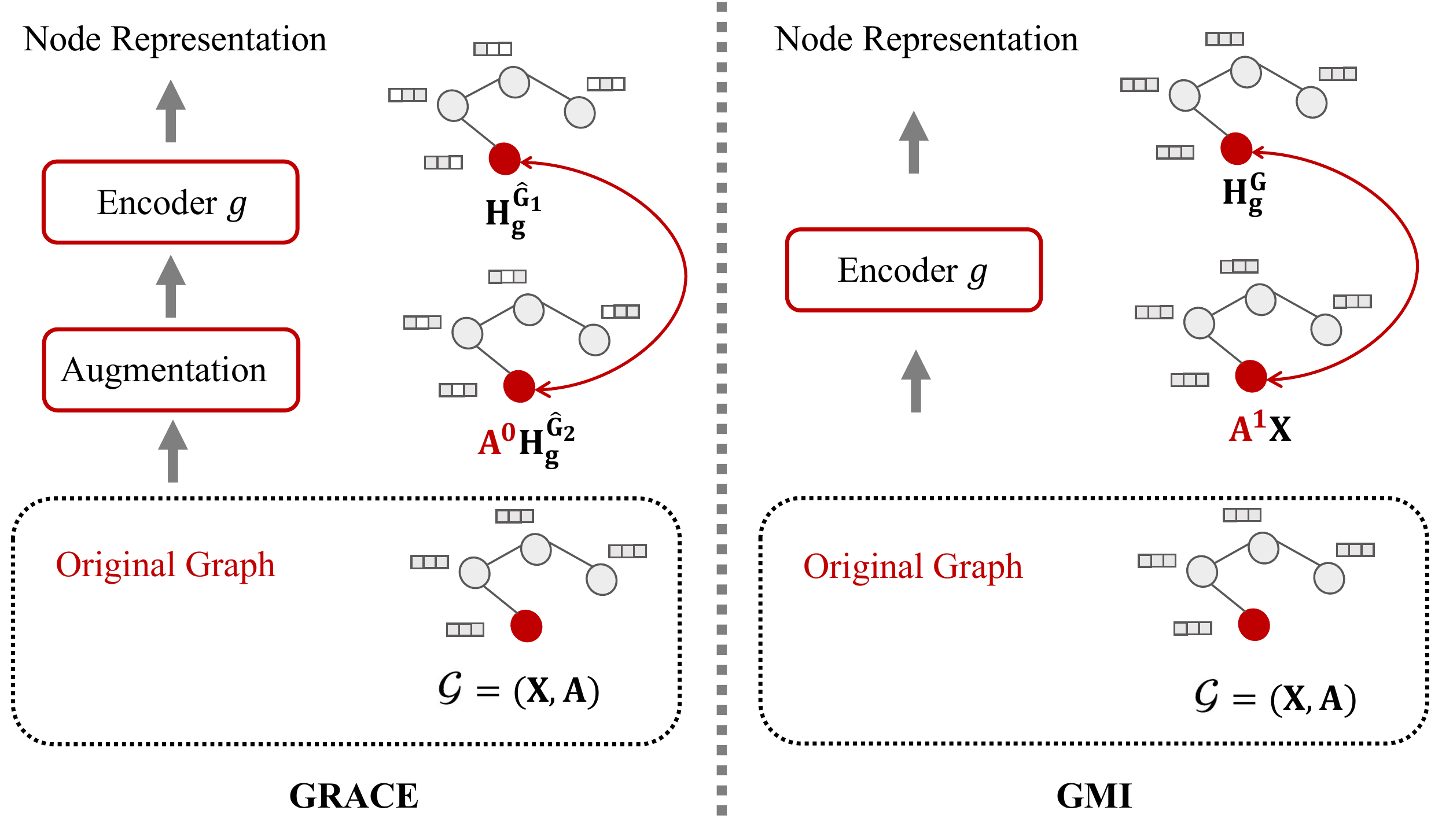}
  \label{subfig:local}
 }
 \subfigure[Global contrasting methods]{
  \includegraphics[height=0.25\textwidth]{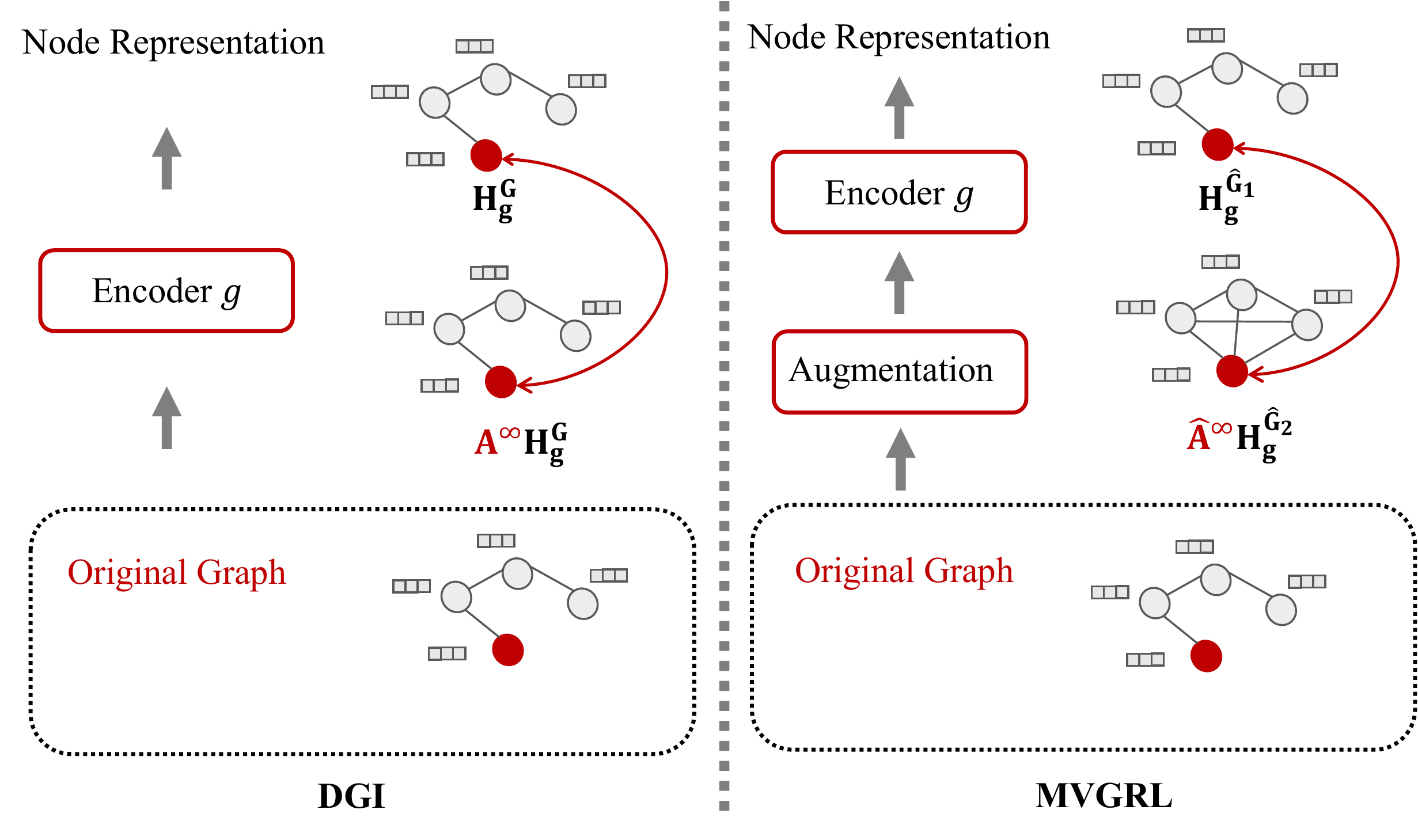}
  \label{subfig:global}
 }
 \vspace{-0.2cm}
 \caption{(a) Localised contrasting methods interpreted with the architecture of \ourmethod. (b) Global contrasting methods interpreted with the architecture of \ourmethod.}
 \vspace{-0.55cm}
 \label{fig:sim_intro}
\end{figure*}

\subsection{Connections to Localised Contrasting Methods}
Localised contrasting methods form pretext tasks by pulling the representation of a node closer to its augmented counterpart or close neighbours to distil the localised contextual information \cite{zhu2020deep, peng2020graph, jiao2020sub}. These methods can be considered as \ourmethod\ with a tiny $n$ for the power of  $\textbf{A}$ when building the contextual view. To illustrate this point, we present GRACE \cite{zhu2020deep} and GMI \cite{peng2020graph} (i.e., two typical localised methods) with the unified framework in Figure \ref{subfig:local}. 


In particular, GRACE can be considered as contrasting a node to a contextual representation with 0-th power, i.e., $\textbf{A}^0=\textbf{I}$. It employs two different graph augmentation methods to generate two augmented views $\hat{\mathcal{G}}_1$ and  $\hat{\mathcal{G}}_2$, where it pulls the representations of the same node in these two views closer. This node-to-node comparison strategy allows GRACE to extract the most fine-grained information from the contextual scope with only one node (i.e., a node's augmented counterpart). This scheme is equal to applying $\textbf{A}^0$ to the generated contextual representation, as we only care about the semantic information embodied within a node itself:
 \vspace{-2mm}

\begin{equation}
g^{*}(\cdot)=\underset{g}{\rm arg\ max} \operatorname{\textbf{MI}}\left(\textbf{H}_{g}^{\hat{\mathcal{G}}_{1}}, \textbf{A}^{0} \textbf{H}_{g}^{\hat{\mathcal{G}}_2}\right).
\label{eq GRACE}
\vspace{-2mm}
\end{equation}

Different from GRACE, GMI extends the contextual scope to 1-hop neighbourhood. It maximises the MI between a node and the raw features of its 1-hop neighbours. 
This is similar to contrasting the raw features $\textbf{X}$ with 1-th power of $\textbf{A}$, which aggregates a node 1-hop neighbourhood:
 \vspace{-2mm}


\begin{equation}
g^{*}(\cdot)= \underset{g}{{\rm arg\ max}}\operatorname{\textbf{MI}}\left(\textbf{H}_{g}^{\mathcal{G}}, \textbf{A}^{1} \textbf{X}\right).
 \label{eq GMI}
 \vspace{-2mm}
\end{equation}



This contextual representation can provide the very-local information of a node for contrastiveness. However, focusing only on the close neighbourhood, these localised methods neglect useful information from a broader receptive field.



    

\subsection{Connections to Global Contrasting Methods}
Global contrasting methods place discrimination between a node and a graph-level embedding, summarising all node representations in a graph \cite{hassani2020contrastive, velivckovic2018deep}. In a special case (i.e., the input graph only has one connected component), these approaches are similar to \ourmethod\ with infinite power for $\textbf{A}$. The interpretation of DGI and MVGRL in the architecture of \ourmethod\ is presented in Figure \ref{subfig:global}. 
DGI aims to extract global semantic information from graph-level embedding. Specifically, they create the graph-level embedding by coarsely averaging all node embeddings in a graph. 
MVGRL 
extends DGI with additional augmentation, which creates two augmented views $\hat{\mathcal{G}}_1$ and $\hat{\mathcal{G}}_2$.

Similarly, we can also apply the infinite power of $\textbf{A}$ to generate graph-level contextual representations. According to Theorem \ref{theorem graph power}, with the infinite power of $\textbf{A}$, node representations within a connected component become similar and converge to a certain point or shared subspace. This is because nodes iteratively receive messages from all other nodes within the same connected component, which smooths the difference between these nodes. Thus, the oversmoothed representations can be regarded as the graph-level embedding in the aforementioned special case (i.e., the input graph only has one connected component). These global contrasting methods can be approximated by applying infinite power of $\textbf{A}$ to the contextual view in \ourmethod:

\noindent%
  \begin{equation}
g^{*}(\cdot)=\underset{g}{\rm arg\ max} \operatorname{\textbf{MI}}(\textbf{H}_{g}^{\mathcal{G}}, \textbf{A}^{\infty} \textbf{H}^{\mathcal{G}}_g),
\label{eq DGI}
  \end{equation}
  \begin{equation}
 g^{*}(\cdot)=\underset{g}{\rm arg\ max} \operatorname{\textbf{MI}}(\textbf{H}_{g}^{\hat{\mathcal{G}}_1}, \textbf{A}^{\infty} \textbf{H}_g^{\hat{{\mathcal{G}}_2}}),
 \label{eq MVGRL}
 \vspace{-2mm}
  \end{equation}
  where Equation (\ref{eq DGI}) and Equation (\ref{eq MVGRL}) are formulas for DGI and MVGRL in the unified framework. Though these global contrasting methods can extract supervision signals from the global view, with the largest contextual scope, it is inevitable to include irrelevant noise and impede the model training. 

\subsection{Theoretical Justification for Flexible Contextual Scope}
\label{sec theoretical justification}


Here, we provide theoretical justification for why different graphs require different contextual scope from the perspective of homophily dominance \cite{zhu2020beyond}. Moreover, based on this theoretical analysis, we can guide the selection of $n$-th power for $\textbf{A}$ in model training. 

Existing GNNs assume homophily in prior graphs to be effective. To obtain effective representations, nodes in the contextual scope are expected to be homophily dominant\cite{zhu2020beyond}, which can be achieved when $\mathcal{P}_n(i)$ is at least larger than any $\{|\frac{j \in \mathcal{N}_n(i) \wedge y_j \in c}{j \in \mathcal{N}_n(i)}|; c \in C\}$, where $C$ is number of classes for the graph. Here we consider $i$ as a node with the average degree of $\mathcal{G}$, $d$, and assume $\mathcal{P}_1(i)$ equals to the homophily rate of a graph, i.e., the proportion of edges connecting same class nodes. To hold homophily dominance, $\mathcal{P}_n(i)$ needs to be as large as possible. The following lemma shows the lower bound of $\mathcal{P}_n(i)$ is determined by $\mathcal{P}_1(i)$ and $d$:

\begin{lemma}
\textit{Given $\mathcal{G}$ with an average degree of $d \geq 1$ for each node and contextual homophily rate with 1-th power $\mathcal{P}_1(i)$. With $n$-th power of $\textbf{A}$, the lower bound of $\mathcal{P}_n(i)$ is:}
\begin{equation}
    \mathcal{P}_n(i) > \frac{(d-1)\mathcal{P}_1(i)(d^n\mathcal{P}_1^n(i)-1)}{(d^n-1)(d\mathcal{P}_1(i)-1)}.
\label{low bound p}
 \vspace{-6mm}
\end{equation}
\label{lemma p}
\end{lemma}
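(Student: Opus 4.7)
The plan is to bound the numerator and denominator of $\mathcal{P}_n(i)$ separately by a layer-wise branching argument on the hop structure around $i$, and then combine the two geometric sums into the claimed closed form.

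First, I would count the total vertices in $\mathcal{N}_n(i)$ by breadth-first expansion. Because every vertex has average degree $d$, the number of length-$k$ walks from $i$ is at most $d^{k}$, and hence
\begin{equation}
|\mathcal{N}_n(i)| \;\le\; \sum_{k=1}^{n} d^{k} \;=\; \frac{d(d^{n}-1)}{d-1}.
\end{equation}
Next, I would count homophilic endpoints. Under the assumption that at every encountered vertex the fraction of same-label neighbors is approximately $\mathcal{P}_1(i)$, each layer of the BFS retains a factor $\mathcal{P}_1(i)$ of its branches, so the number of length-$k$ homophilic walks is at least $(d\mathcal{P}_1(i))^{k}$. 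Summing across $k=1,\dots,n$,
\begin{equation}
\bigl|\{j\in\mathcal{N}_n(i):y_j=y_i\}\bigr| \;>\; \sum_{k=1}^{n}\bigl(d\mathcal{P}_1(i)\bigr)^{k} \;=\; \frac{d\mathcal{P}_1(i)\bigl((d\mathcal{P}_1(i))^{n}-1\bigr)}{d\mathcal{P}_1(i)-1}.
\end{equation}
Dividing the second expression by the first and cancelling the common factor $d$ gives exactly the closed-form bound stated in the lemma. The inequality is strict because the walk-based upper bound on $|\mathcal{N}_n(i)|$ is slack whenever the $n$-hop ball contains a cycle, which strictly reduces the denominator relative to $\sum_{k} d^{k}$ and thus strictly increases the ratio.

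The hard part will be making the branching counts rigorous beyond the $d$-regular, uniformly-homophilous tree: real graphs have cycles, non-uniform degrees, and per-vertex fluctuations in $\mathcal{P}_1(\cdot)$. I would address this by (i) performing the counting at the level of walks (or equivalently on a BFS tree rooted at $i$) so that the geometric identities are combinatorial equalities, and (ii) invoking the lemma's built-in hypotheses that $d$ is the \emph{average} degree and that $\mathcal{P}_1(i)$ is representative of same-class transitions at other same-class vertices, which lets the per-hop multiplicative factor $\mathcal{P}_1(i)$ propagate. A secondary technicality is that both denominators $(d-1)$ and $(d\mathcal{P}_1(i)-1)$ can vanish; I would dispatch the degenerate cases $d=1$ and $d\mathcal{P}_1(i)=1$ by taking limits of the geometric sums (which collapse to $n$) and verifying that the bound remains consistent, while for $d\mathcal{P}_1(i)<1$ I would confirm that both factors flip sign together so the overall fraction keeps the stated direction.
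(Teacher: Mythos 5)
Your proof takes essentially the same approach as the paper's: both bound the size of the $n$-hop neighbourhood by $\sum_{k=1}^n d^k$, lower-bound the homophilic count by $\sum_{k=1}^n (d\mathcal{P}_1(i))^k$, and divide the two geometric sums to reach the closed form in Equation~(\ref{low bound p}). The only difference is your slightly more careful framing of $\sum_{k=1}^n d^k$ as a walk-based \emph{upper} bound on $|\mathcal{N}_n(i)|$ rather than an exact count, together with your attention to the degenerate cases $d=1$ and $d\mathcal{P}_1(i)=1$ and to the sign behaviour when $d\mathcal{P}_1(i)<1$; the central ratio-of-geometric-sums computation is identical to the paper's.
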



Based on Lemma \ref{lemma p} and Definition \ref{sparsity}, we can derive \yx{four} properties:

\begin{property}
\label{theorem n}
Given a graph $\mathcal{G}$, when $n$ increases, the lower bound of $\mathcal{P}_n(i)$ drops.
\end{property}

\begin{property}
\label{theorem p}
Given a graph $\mathcal{G}$, when $\mathcal{P}_1(i)$ increases, the lower bound of $\mathcal{P}_n(i)$ increases.
\end{property}

\begin{property}
\label{theorem d}
Given a graph $\mathcal{G}$, when $d$ increases, the lower bound of $\mathcal{P}_n(i)$ drops.
\end{property}

\begin{property}
\label{theorem t}
Given a graph $\mathcal{G}$, when $T_{\mathcal{G}}$ increases, the lower bound of $\mathcal{P}_n(i)$ drops.
\end{property}

Property \ref{theorem p}, Property \ref{theorem d} and Property \ref{theorem t} indicate the choice of $n$ is closely related to two essential properties of graphs, i.e., sparsity and homophily. Specifically, when a graph is dense (i.e., the sparsity $T_{\mathcal{G}}$ is large), a large $n$ can easily break the homophily dominance and degrade model performance. In addition, under the same level of sparsity, a low $n$ is preferable for a low $\mathcal{P}_1(i)$, i.e., homophily rate. These findings are consistent with our experiment results shown in Section \ref{n-th graph power}. The detailed proof of Lemma \ref{lemma p}, Properties \ref{theorem n}-\ref{theorem t} are provided as follows:

\begin{proof}
Give a graph $\mathcal{G}$ with an average degree of $d$ for each node and homophily rate with 1-th power of $\textbf{A}$, $\mathcal{P}_1(i)$. $d$, $\mathcal{P}_1(i)$ and $n$ are all positive numbers. With $d$, the total number of nodes in the neighbourhood for $n$-th power of $\textbf{A}$ is equal to $d + d^2 + \cdots + d^n$. For the number of homophilic nodes in the neighbourhood, we know it is at least $d\mathcal{P}_1(i)$ for the first-hop neighbourhood, $d^2\mathcal{P}_1(i)^2$ for the second-hop neighbourhood, and so on. Thus, the lower bound of this number is equal to $d\mathcal{P}_1(i) + d^2\mathcal{P}_1(i)^2 + \cdots + d^n\mathcal{P}_1(i)^n$. Here, we can formulate the lower bound of $\mathcal{P}_n(i)$ as:
\begin{equation}
\label{eq lemma 1}
\begin{aligned}
    \mathcal{P}_n(i) &> \frac{\sum _{k=1}^n d^k \mathcal{P}_1(i)^k}{\sum _{k=1}^n d^k} \\
    &= \frac{(d-1)\mathcal{P}_1  \left(d^n \mathcal{P}_1(i)^n-1\right)}{\left(d^n-1\right) (d \mathcal{P}_1(i)-1)}.
\end{aligned}
\end{equation}

From the above formula, we first prove Property \ref{theorem n}. The gap between denominator and numerator of the lower bound of $\mathcal{P}_n(i)$ would become larger when $n$ increases. This is because when $n$ increases by 1, the numerator would increase $d^n\mathcal{P}_1(i)^n$, while the denominator would increase $d^n$. It is easy to observe the gap between $d^n\mathcal{P}_1(i)^n$ and $d^n$ would become larger and larger when $n$ increases as $0 < \mathcal{P}_1(i) < 1$. Thus, we prove that when $n$ increases, the lower bound of $\mathcal{P}_n(i)$ drops.

To prove Property \ref{theorem p}, we present the partial derivative with respect to $\mathcal{P}_1(i)$ for the numerator of the lower bound of $\mathcal{P}_n(i)$:
\begin{equation}
\begin{aligned}
    f^{\mathcal{P}_n(i)}_{num} &= \sum_{k=1}^n d^k \mathcal{P}_1(i)^k, \\
    \frac{\partial f^{\mathcal{P}_n(i)}_{num}}{\partial \mathcal{P}_1(i)} &= d + 2d^2\mathcal{P}_1(i) + 3d^3\mathcal{P}_1(i)^2 + \cdots + nd^n\mathcal{P}_1(i)^{n-1}, 
\end{aligned}  
\end{equation}
where $f^{\mathcal{P}_n(i)}_{num}$ is the numerator for $\mathcal{P}_n(i)$. As $d \geq 1$ and $\mathcal{P}_1 > 0$, $\frac{\partial f^{\mathcal{P}_n(i)}_{num}}{\partial \mathcal{P}_1(i)}$ is a positive number. In addition, changing $\mathcal{P}_1(i)$ will not affect the denominator of the lower bound of $\mathcal{P}_n(i)$. Thus, we can derive that increasing $\mathcal{P}_1(i)$ will lead to the monotonical increase for the lower bound of $\mathcal{P}_n(i)$ and prove Property \ref{theorem p}.

From Equation \ref{eq lemma 1}, we can also prove Property \ref{theorem d} as follows:
\begin{equation}
    \begin{aligned}
    \mathcal{P}_n(i) &> \frac{\sum _{k=1}^n d^k \mathcal{P}_1(i)^k}{\sum _{k=1}^n d^k} \\
    &= \sum^n_{k=1}(\frac{d^k}{\sum^n_{k=1}d^k})\mathcal{P}_1(i)^k,
    \end{aligned}
\end{equation}
here we define $\frac{d^k}{\sum^n_{k=1}d^k}$ as $M$, then we can obtain the derivative of $M$:
\begin{equation}
    \begin{aligned}
    \frac{dM}{dd} &= \sum^{n}_{k=1}\frac{d^{k-1}(\sum^n_{k=1}d^k)-d^k(\sum^n_{k=1}kd^{k-1})}{(\sum^n_{k=1}d^k)^2} \\
    &= \frac{\sum_{k=1}^nd^{2k-1} - \sum_{k=1}^nkd^{2k-1}}{(\sum^n_{k=1}d^k)^2} \\
    &= \frac{\sum_{k-1}^nd^{2k-1}-\sum_{k=1}^nkd^{2k-1}}{(\sum^n_{k=1}d^k)^2} \\
    &= \frac{\sum_{k=1}^n(1-k)d^{2k-1}}{(\sum^n_{k=1}d^k)^2} \\
    &= \frac{\sum^{n}_{k=2}(1-k)d^{2k-1}}{(\sum^n_{k=1}d^k)^2} < 0.
    \end{aligned}
\end{equation}

From the derivative of $M$, we can see the increase of $d$ (i.e., increasing sparsity $T_{\mathcal{G}}$) leads to the monotonical decrease of $M$, which causes the drop of the lower bound of $\mathcal{P}_n(i)$. Thus, we prove Property \ref{theorem d} and \ref{theorem t}.

\end{proof}

Though different graphs require different contextual scope, aforementioned GCL methods only have fixed contextual scope. In contrast, \textbf{\ourmethod~can adjust the contextual scope by changing $n$}, which allows us to choose the most suitable scale for datasets with different properties. Moreover, we can select $n$ guiding by the theoretical findings above.

\subsection{Guidance of Selecting $n$}
We consider the $n$ just before the break of strong homophily dominance (i.e., $\mathcal{P}_n(i) > 0.5$) as the selected $n$ for model training. Homophily-dominant neighbourhoods are more beneficial for GNN layers, since in such neighbourhoods the class label of each node may be determined by the majority of the class labels in the neighbourhood \cite{zhu2020beyond}. However, only meeting the homophily-dominant requirement may not be sufficient for generating high quality contextual representation. This is because homophily-dominant neighbourhood can still include too much abundant or noisy neighbouring information, i.e., neighbouring nodes with different classes. To ensure the neighbourhood aggregation is conducted with a majority of homophilic neighbours, we consider the break of strong homophily dominance as the condition to select $n$. Surprisingly, this approach provides a good guidance to the selection of $n$. The $n$ just before the break of strong homophily dominance of $\mathcal{P}_n(i)$ is consistent with the best $n$ for 4 out of 5 datasets. The experiment result is presented in Section \ref{n-th graph power}.


\section{Related Work}
\subsection{Graph Neural Networks}
Firstly introduced in Scarselli's work \cite{scarselli2008graph}, GNNs aim to extend deep neural networks to handle graph-structured data. GNNs consist of two domains: spectral-based methods \cite{defferrard2016convolutional, henaff2015deep, bruna2013spectral, kipf2016semi} and spatial-based methods \cite{velivckovic2017graph, hamilton2017inductive, wu2019simplifying}. While spectral-based methods adopt spectral representation of graphs, spatial-based methods conduct feature aggregation based on nodes spatial neighbours (e.g., GAT \cite{velivckovic2017graph}). Notably, GCN \cite{velivckovic2017graph} bridges the gap between these two domains by approximating spectral-based convolution with the first order of Chebyshev polynomial filters. Spatial-based methods are currently more prosperous since they have advantages in efficiency and general applicability. For example, to further improve GCN, GAT \cite{velivckovic2017graph} presents an attention-based approach to weightly aggregate node neighbours representations. 
SGC \cite{chen2020simple} simplifies GCN by removing the non-linearity and collapsing weight matrices among graph convolution layers. However, most GNNs rely extensively on labelling information, whereas the collecting process is expensive. To address this issue, GCL emerged.
We proposed \ourmethod, which can generate effective node representations without labelling information.

\subsection{Contrastive Learning}
Contrastive learning is a self-supervised learning paradigm usually based on MI maximisation. It aims to maximise MI between similar data instances (e.g., the same object in different augmented views and representations of the same object in different scales)
\cite{liu2021graph, tan2022federated}. It has been successfully applied in image classification tasks with promising results. For example, Deep Infomax \cite{hjelm2018learning}, Moco \cite{he2020momentum}, and SimCLR \cite{chen2020simple} train image encoders by discriminating two augmented images. Recently, some works attempted to adapt this concept to GNNs. DGI \cite{velivckovic2018deep} borrows the MI maximisation idea from Deep Infomax \cite{hjelm2018learning} and builds contrastiveness by contrasting node- and graph-level contextual representations. MVGRL \cite{hassani2020contrastive} further enriches the contrastiveness by building contrastiveness between augmented views of graphs.

Different from DGI and MVGRL, GRACE \cite{zhu2020deep} and
GMI \cite{peng2020graph}
create contextual representation from the same scale, first-order neighbourhood, respectively. Though these GCL methods have achieved promising results, they still share several issues, including the fixed contextual scope and biased contextual representation. \ourmethod~addresses these problems as it can easily adjust the contextual scope and ensure contrastiveness is built within connected components.

\vspace{-1.7mm}

\section{Experiment}
\begin{table*}[t]
	\scriptsize
	\caption{Node classification results on 7 small to medium-sized datasets compared with 14 baselines. Here, the ``Data'' column indicates what kind of data the method need to use in training. \textbf{X, A} and \textbf{Y} means feature matrix, adjacency matrix, and label information, respectively. OOM represents out-of-memory. The best performance for each dataset is in \textbf{bold}.}
	\resizebox{18cm}{!}{
	\begin{tabular*}{1.0\textwidth}{@{\extracolsep{\fill}} l|l|ccccccc}
		\toprule
		\textbf{Data} & \textbf{Method} & \textbf{Cora} & \textbf{CiteSeer} & \textbf{PubMed} & \tabincell{c}{\textbf{Coauthor} \\ \textbf{Physics}} & \tabincell{c}{\textbf{Coauthor} \\ \textbf{CS}} & \tabincell{c}{\textbf{Amazon}\\\textbf{Computers}} & \tabincell{c}{\textbf{Amazon}\\\textbf{Photos}} \\
		\midrule
		\textbf{X, A, Y} & MLP & 56.1$\pm{0.3}$ & 56.9$\pm{0.4}$ & 71.4$\pm{0.1}$ & 93.5$\pm{0.1}$  & 90.4$\pm{0.1}$ & 73.9$\pm{0.1}$ & 78.5$\pm{0.1}$   \\
		\textbf{X, A, Y} & GCN & 81.5 & 70.3 & 79.0 & 95.7$\pm{0.2}$  & 93.0$\pm{0.3}$ & 86.3$\pm{0.5}$ & 87.3$\pm{1.0}$  \\
		\textbf{X, A, Y} & GAT & 83.0$\pm{0.7}$ & 72.5$\pm{0.7}$ & 79.0$\pm{0.3}$  & 95.5$\pm{0.2}$  & 92.3$\pm{0.2}$ & 87.1$\pm{0.4}$ & 86.2$\pm{1.5}$ \\
		\textbf{X, A, Y} & SGC & 81.0$\pm{0.0}$ & 71.9$\pm{0.1}$ & 78.9$\pm{0.0}$ & 95.8$\pm{0.1}$  & 92.7$\pm{0.1}$ & 74.4$\pm{0.1}$ & 86.4$\pm{0.0}$ \\
        \midrule
		\textbf{X, A}	& DeepWalk & 69.5$\pm{0.6}$ & 58.8$\pm{0.6}$ & 69.9$\pm{1.3}$  & 91.8$\pm{0.2}$  & 84.6$\pm{0.2}$ & 85.7$\pm{0.1}$& 89.4$\pm{0.1}$ \\
		\textbf{X, A}	& Node2vec & 71.2$\pm{1.0}$ & 47.6$\pm{0.8}$ & 66.5$\pm{1.0}$  & 91.2$\pm{0.1}$  & 85.1$\pm{0.1}$ & 84.4$\pm{0.1}$& 89.7$\pm{0.1}$ \\
		\textbf{X, A}	& GAE & 71.1$\pm{0.4}$ & 65.2$\pm{0.4}$ & 71.7$\pm{0.9}$  & 94.9$\pm{0.1}$  & 90.0$\pm{0.7}$ & 85.3$\pm{0.2}$& 91.6$\pm{0.1}$ \\
		\textbf{X, A}	& VGAE & 79.8$\pm{0.9}$ & 66.8$\pm{0.4}$ & 77.2$\pm{0.3}$  & 94.5$\pm{0.1}$  & 92.1$\pm{0.1}$ & 86.4$\pm{0.2}$& 92.2$\pm{0.1}$ \\
		\midrule
		\textbf{X, A}	& DGI & 81.7$\pm{0.6}$ & 71.5$\pm{0.7}$ & 77.3$\pm{0.6}$  & 94.5$\pm{0.5}$  & 92.2$\pm{0.6}$ & 84.1$\pm{0.4}$& 91.5$\pm{0.3}$ \\
		\textbf{X, A}	& GMI & 82.7$\pm{0.2}$ & 73.0$\pm{0.3}$ & 80.1$\pm{0.2}$ & OOM & OOM  &  76.8$\pm{0.1}$ & 85.1$\pm{0.1}$ \\
		\textbf{X, A}	& MVGRL & 82.9$\pm{0.7}$ & 72.6$\pm{0.7}$ & 79.4$\pm{0.3}$  & 95.3$\pm{0.1}$  & 92.1$\pm{0.1}$ & 81.8$\pm{0.5}$ & 90.7$\pm{0.3}$ \\
		\textbf{X, A}	& GRACE & 80.0$\pm{0.4}$ & 71.7$\pm{0.6}$ & 79.5$\pm{1.1}$ & OOM & 92.8$\pm{0.1}$ & 87.2$\pm{0.4}$  & 92.7$\pm{0.3}$\\
		\textbf{X, A}	& GCA & 80.4$\pm{0.4}$ & 71.2$\pm{0.2}$ &  80.4$\pm{0.8}$ & \textbf{95.9}$\pm{0.2}$  & 93.3$\pm{0.1}$ & 87.8$\pm{0.3}$ & 93.2$\pm{0.3}$  \\
		\textbf{X, A}	& BGRL & 81.1$\pm{0.2}$ & 71.6$\pm{0.4}$ &  80.0$\pm{0.4}$ & 95.8$\pm{0.4}$  & 93.3$\pm{0.4}$ & 88.9$\pm{0.3}$ & 93.2$\pm{0.3}$  \\
		\midrule
			\textbf{X, A}	& \textbf{\ourmethod} & \textbf{84.7}$\pm{0.3}$ & \textbf{74.1}$\pm{0.2}$& \textbf{81.6}$\pm{0.3}$ &  95.6$\pm{0.3}$  & \textbf{93.4}$\pm{0.3}$ &\textbf{90.1}$\pm{0.5}$& \textbf{93.8}$\pm{0.7}$ \\
		\bottomrule
	\end{tabular*}
	}
\label{tab: classification results}
\vspace{-6.5mm}
\end{table*}

\begin{table}[t]
\caption{The statistics of benchmark datasets. }
\scriptsize
	\centering
	\begin{tabular}{@{}lccccc@{}}
		\toprule
		\textbf{Dataset} &  \textbf{Nodes} & \textbf{Edges} & \textbf{Features} & \textbf{Classes}  \\ \midrule
		\textbf{Cora}              & 2,708    & 5,429        & 1,433            & 7            \\
		\textbf{CiteSeer}         & 3,327    & 4,732        & 3,703            & 6   \\
		\textbf{PubMed}           & 19,717   & 44,338      & 500               & 3      \\
		\textbf{Coauthor CS}     & 18,333     & 81,894       & 6,805              & 15        \\
		\textbf{Coauthor Physics}     & 34,493     & 991,848       & 8,415       & 5          \\
		\textbf{Amazon Computers}     & 13,752     & 245,861       & 767       & 10           \\
		\textbf{Amazon Photos}    & 7,650     & 119,081       & 745              & 8              \\
		\textbf{ogbn-arxiv}    & 169,343     & 1,166,243       & 128             & 40      \\
 \bottomrule
	\end{tabular}
	\vspace{-5.5mm}
	\label{tab:dataset}
\end{table}

\begin{table}[tbp]
\centering
\scriptsize
  \caption{Node classification result on ogbn-arxiv. 
  }
  \label{tab:ogbn-products}
  \begin{tabularx}{0.8\linewidth}{l|p{2.3cm}<{\centering} p{1.5cm}<{\centering}}
    \toprule
    \textbf{Method} &\textbf{Valid} & \textbf{Test} \\
    \midrule
    MLP & 57.7$\pm{0.4}$ & 55.5$\pm{0.2}$ \\
    Supervised GCN & 73.0$\pm{0.2}$ & 71.7$\pm{0.3}$  \\
    \midrule
    Node2vec & 71.3$\pm{0.1}$ & 70.1$\pm{0.1}$ \\
    DGI & 71.3$\pm{0.1}$ & 70.3$\pm{0.2}$ \\
    GRACE & 72.6$\pm{0.2}$ & 71.5$\pm{0.1}$ \\
    BGRL & 72.5$\pm{2.1}$ & 71.6$\pm{1.6}$ \\
    \midrule
    \textbf{\ourmethod}  & 72.6$\pm{0.4}$ & 71.4$\pm{0.6}$ \\
  \bottomrule
\end{tabularx}
\label{tab:ogbn}
\vspace{-6.5mm}
\end{table}

\subsection{Details of the Experiments}
To evaluate the effectiveness of our proposed method, we conducted extensive experiments on 8 benchmark datasets, including 5 citation networks (i.e., Cora, CiteSeer, PubMed, Coauthor CS, and Physics), 2 Amazon co-purchasing networks (i.e., Amazon Computers and Photos), and a large-scale dataset, ogbn-arxiv. The statistic of these datasets is summarised in Table \ref{tab:dataset}. For the first three networks, we adopt the same dataset split as \cite{yang2016revisiting}. For Coauthor and Amazon datasets, we randomly split these datasets, where 10\%, 10\% and the remaining nodes are chosen for training, validation and test set, respectively. For the large-scale dataset, ogbn-arxiv, we use the default setting as described in \cite{hu2020open}. 

In our experiment, we mainly tune three parameters: $n$-th power of $\textbf{A}$ , sample size $S$, and hidden size $D'$. Specifically, $n$ is selected from 1 to 20, while $S$ is chosen from 500 to 3000, with every increment by 500, for Cora and Citeseer, and 3000 to 10000, with every increment by 1000, for the remaining datasets. For $D'$, it is chosen from \{512, 1024, 2048, 4196, 8192\}. After tunning these parameters, the best performance of our model for each dataset is recorded in Table \ref{tab: classification results}. 

\begin{table*}[t]
    
    \footnotesize
    \caption{The evaluation of $n$-th power of $\textbf{A}$ on five benchmark datasets. $\mathcal{P}_n(i)$ is the contextual homophily rate for each $n$. The best performance for each dataset is in bold. The $n$ and the model performance just before the break of homophily dominance according to $\mathcal{P}_n(i)$ are underlined. ``Homo rate'' means the proportion of homophily edges, which connect nodes with the same class, on total number of edges in the graph. It equals to $\mathcal{P}_{1}(i)$. }
    \centering
    \scalebox{0.78}{
    \begin{tabular*}{1.25\textwidth}{lcccccccccccc}
    \toprule
    {\textbf{Dataset}} & \textbf{Sparsity $T_{\mathcal{G}}$} & \textbf{Homo Rate $\mathcal{P}_1(i)$} &  \textbf{1}  &    \textbf{2}  &    \textbf{3}  &    \textbf{4}  &    \textbf{5}  &    \textbf{6}  &    \textbf{7}  &    \textbf{8}  &    \textbf{9}  &    \textbf{10} \\
    \midrule
    \textbf{Cora}      &  0.074\%  &  81.0\%  &  80.1$\pm{0.2}$ &  82.0$\pm{0.1}$&  83.2$\pm{0.2}$&  83.9$\pm{0.3}$&  \underline{84.2$\pm{0.2}$} &  84.3$\pm{0.4}$&  84.4$\pm{0.3}$&  84.6$\pm{0.4}$&  \textbf{84.7}$\pm{\textbf{0.3}}$&  83.8$\pm{0.4}$\\
    \textbf{$\mathcal{P}_n(i)^{cora
    }$}  & -  &  -
    & 0.81 & 0.75 & 0.67 & 0.59 & \underline{0.52} & 0.45 & 0.39 & 0.34 & 0.29 & 0.25 \\
    \midrule
    \textbf{CiteSeer} &    0.042\%  &  72.6\%   &  74.0$\pm{0.2}$&  \underline{\textbf{74.1}$\pm{\textbf{0.2}}$}&  73.9$\pm{0.3}$&  73.7$\pm{0.3}$&  73.8$\pm{0.4}$& 
    73.7$\pm{0.2}$&  73.7$\pm{0.3}$&  73.7$\pm{0.2}$&  73.6$\pm{0.3}$&  73.5$\pm{0.2}$\\
    \textbf{$\mathcal{P}_n(i)^{citeseer}$}  & -  &  -
    & 0.73 & \underline{0.6} & 0.48 & 0.38 & 0.3 & 0.23 & 0.17 & 0.13 & 0.1 & 0.07 \\
    \midrule
    \textbf{PubMed}   &   0.011\%    &  80.2\%  &  76.6$\pm{0.4}$&  78.6$\pm{0.3}$&  79.4$\pm{0.4}$&  80.1$\pm{0.4}$&  80.4$\pm{0.2}$ &  80.4$\pm{0.4}$ &  79.8$\pm{0.3}$ &  81.3$\pm{0.2}$ &  81.3$\pm{0.1}$ &  \underline{\textbf{81.4}$\pm{\textbf{0.2}}$} \\
    \textbf{$\mathcal{P}_n(i)^{pubmed}$}  & -  &  -
    & 0.80 & 0.73 & 0.68 & 0.63 & 0.59 & 0.56 & 0.54 & 0.52 & 0.51 & \underline{0.50} \\
    \midrule
    \textbf{Amazon Computers} & 0.130\%  &  77.7\%  &  89.9$\pm{0.5}$ &  \underline{\textbf{90.1}$\pm{\textbf{0.8}}$} &  89.1$\pm{1.1}$ &  89.4$\pm{1.0}$ &  89.3$\pm{1.1}$ &  87.7$\pm{0.7}$&  87.4$\pm{1.2}$& 87.1$\pm{0.9}$ &  87.0$\pm{0.8}$ &  87.2$\pm{0.9}$ \\
    \textbf{$\mathcal{P}_n(i)^{ac}$}  & -  &  -
    & 0.78 & \underline{0.62} & 0.48 & 0.37 & 0.29 & 0.23 & 0.18 & 0.14 & 0.11 & 0.08 \\
    \midrule
    \textbf{Amazon Photos}&  0.203\%   &  82.7\%  &  92.5$\pm{0.6}$ &  93.4$\pm{0.4}$&  \underline{\textbf{93.8}$\pm{\textbf{0.4}}$} &  93.4$\pm{0.3}$ &  93.0$\pm{0.9}$ &  92.9$\pm{0.5}$ &  92.9$\pm{1.3}$ &  92.0$\pm{1.5}$&  92.3$\pm{1.2}$ &  91.6$\pm{0.6}$ \\
    \textbf{$\mathcal{P}_n(i)^{ap}$}  & -  &  -
   & 0.83 & 0.71 & \underline{0.59} & 0.49 & 0.4 & 0.33 & 0.28 & 0.23 & 0.19 & 0.16 \\
    \bottomrule
    \end{tabular*}
    }
    \vspace{-6.5mm}
    \label{tab:sparsity}
\end{table*}

\subsection{Node Classification Results}
\label{sec:node classification}
We choose 14 baselines to be compared with \ourmethod\ on node classification tasks. These baselines consist of MLP and three types of GNNs: supervised-, conventional self-supervised, and GCL approaches. For supervised GNNs, we select three widely-adopted supervised GNNs, which are GCN \cite{kipf2016semi}, GAT \cite{velivckovic2017graph}, SGC \cite{wu2019simplifying}. 
Four conventional self-supervised methods including DeepWalk~\cite{perozzi2014deepwalk}, Node2vec~\cite{grover2016node2vec}, GAE~\cite{kipf2016variational} and VGAE~\cite{kipf2016variational}, and six GCL methods including DGI~\cite{velivckovic2018deep}, GMI~ \cite{peng2020graph}, MVGRL~\cite{hassani2020contrastive}, GRACE~\cite{zhu2020deep}, GCA~\cite{zhu2021graph}, and BGRL~\cite{thakoor2021bootstrapped} are chosen to be compared with our model. 

We run all baselines and our model on each small to medium-sized dataset five times, and the average node classification accuracy and associated standard deviation are reported in Table \ref{tab: classification results}. The table shows that \ourmethod\  achieved the best performance on 6 out of 7 small to medium-sized datasets. Notably, \ourmethod\ surpasses its self-supervised counterparts by 2.1\% in Cora, 1.1\% in CiteSeer, and 1.2\% in Amazon Computers. 

In addition, we compare \ourmethod\ with supervised methods (i.e., MLP and Supervised GCN) and self-supervised methods, including Node2vec, DGI, GRACE, and BGRL on ogbn-arxiv. The other GCL methods are not selected as they encounter out-of-memory issue during training. The experiment results are presented in Table \ref{tab:ogbn}, where we can observe \ourmethod\ has achieved on-par performance with the most competitive baselines (i.e., GRACE and BGRL). Except for \ourmethod, others results in the table are sourced from ~\cite{thakoor2021bootstrapped}

\ourmethod\ generally achieves promising results on benchmark datasets because with $n$-th power of $\textbf{A}$, \ourmethod~ can tune the contextual scope 
while most GCL baselines can only contrast to a fixed scope. Additionally, \ourmethod\ ensures the contrastiveness is established within connected components to reduce the bias of contextual representations. These advantages allow \ourmethod\ to focus on the suitable scale and lead to model performance improvement.

\begin{table}
\vspace{-1mm}
\caption{Ablation study of \ourmethod.} 
	\scriptsize
	\begin{tabularx}{1.0\linewidth}{l|p{1.9cm}<{\centering} p{1.9cm}<{\centering} p{1.9cm}<{\centering}}
		\toprule
		\textbf{Method} &  \textbf{Cora} & \textbf{Citeseer} & \textbf{Pubmed}  \\ \midrule
		$\textbf{\ourmethod}_{\rm mean}$   & 79.1$\pm{0.5}$ & 70.5$\pm{0.5}$  & 77.2$\pm{0.5}$ \\
		$\textbf{\ourmethod}_{\rm smooth}$   & 81.3$\pm{0.5}$ & 71.9$\pm{0.5}$  & 80.8$\pm{0.5}$ \\
		$\textbf{\ourmethod}_{\rm sig}$  & 84.2$\pm{0.3}$   & 72.2$\pm{0.4}$  & 81.1$\pm{0.2}$ \\
		$\textbf{\ourmethod}_{\rm w/o\ sam}$ & 84.1$\pm{0.5}$ & 73.2$\pm{0.4}$ & 80.8$\pm{1.0}$ \\
		$\textbf{\ourmethod}_{\rm w/o\ p}$  & 80.8$\pm{0.6}$   & 71.4$\pm{0.2}$  & 79.5$\pm{0.4}$ \\
		\midrule
		\textbf{\ourmethod} & \textbf{84.7}$\pm{0.3}$ &\textbf{74.1}$\pm{0.2}$ & \textbf{81.6}$\pm{0.3}$  \\
 \bottomrule
	\end{tabularx}
	 \label{tab:ablation study}
	\vspace{-4mm}
\end{table} 

\begin{figure}
    \centering
    \includegraphics[scale = 0.23]{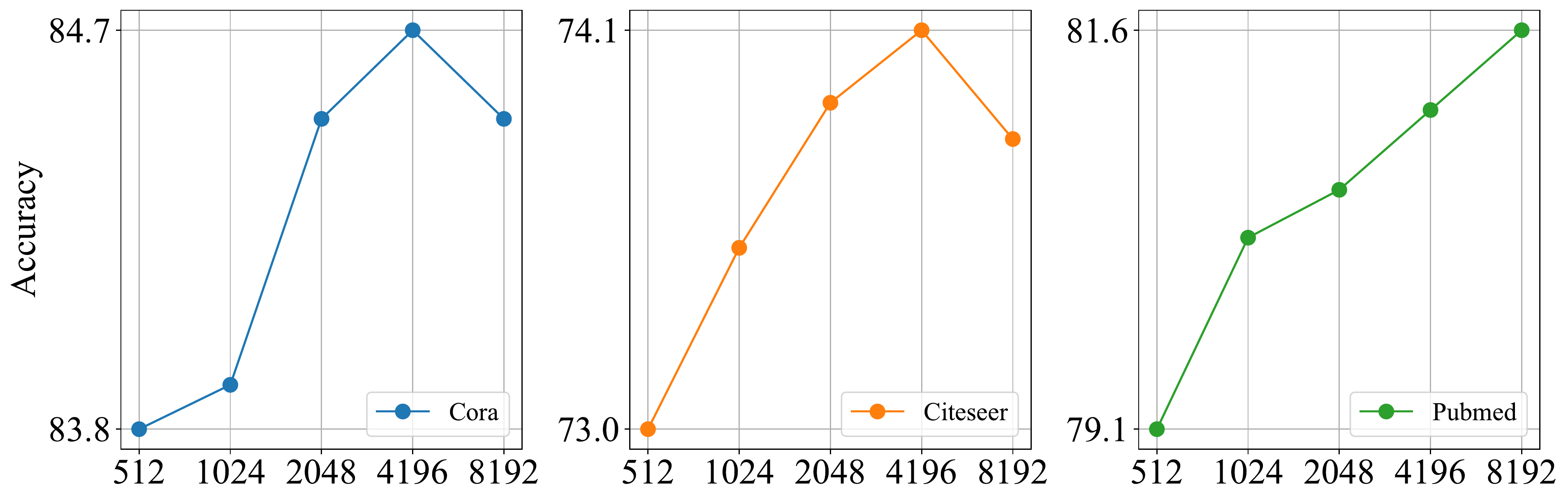}
    \caption{Parameter analysis for hidden size $D$ on Cora, CiteSeer, and PubMed.}
    \label{fig:parameter}
    \vspace{-5.5mm}
\end{figure}

\begin{figure*}
\centering
\subfigure[Cora]{\includegraphics[width=4.5cm]{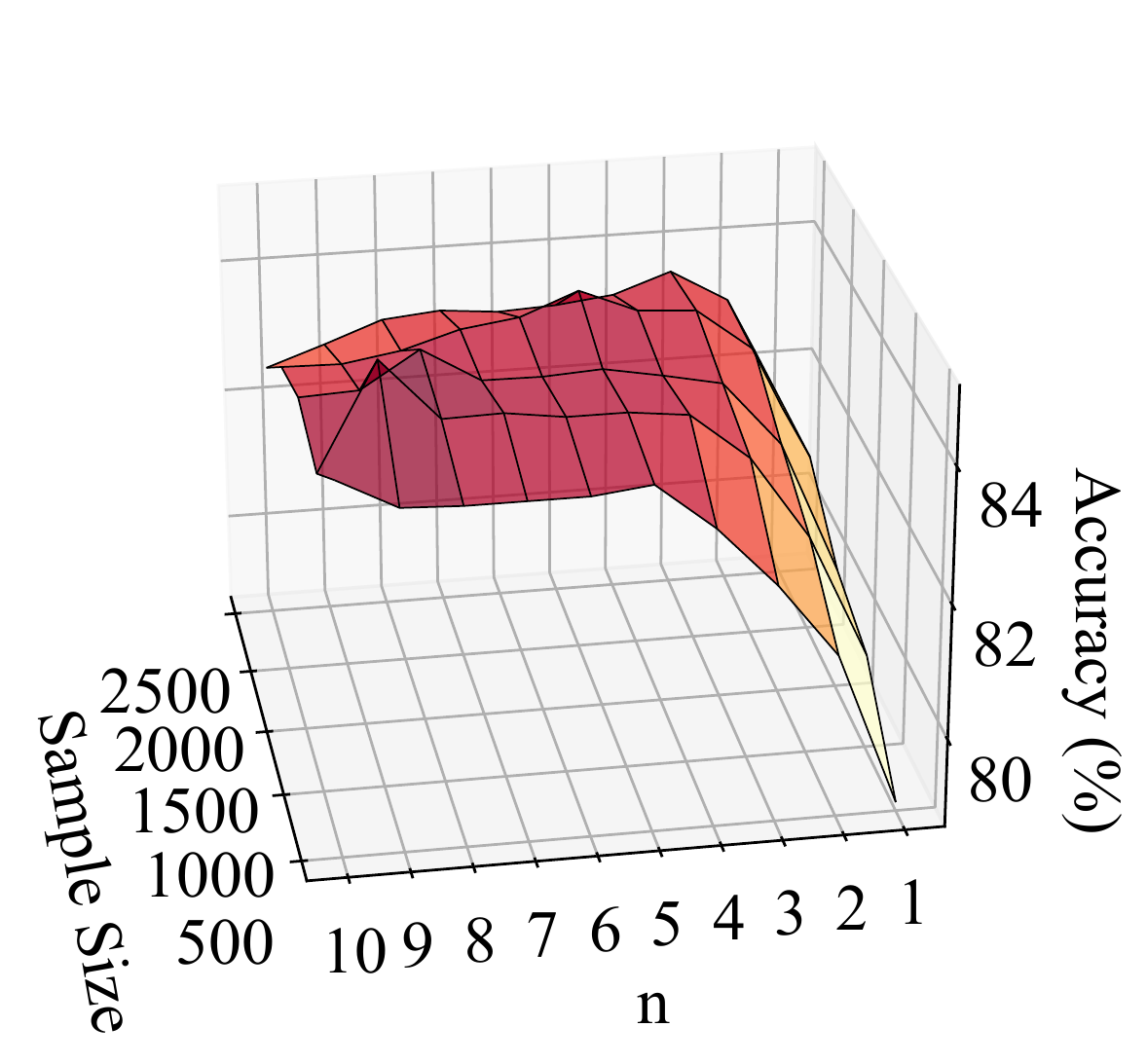}}
\hfill
\subfigure[CiteSeer]{\includegraphics[width=4.5cm]{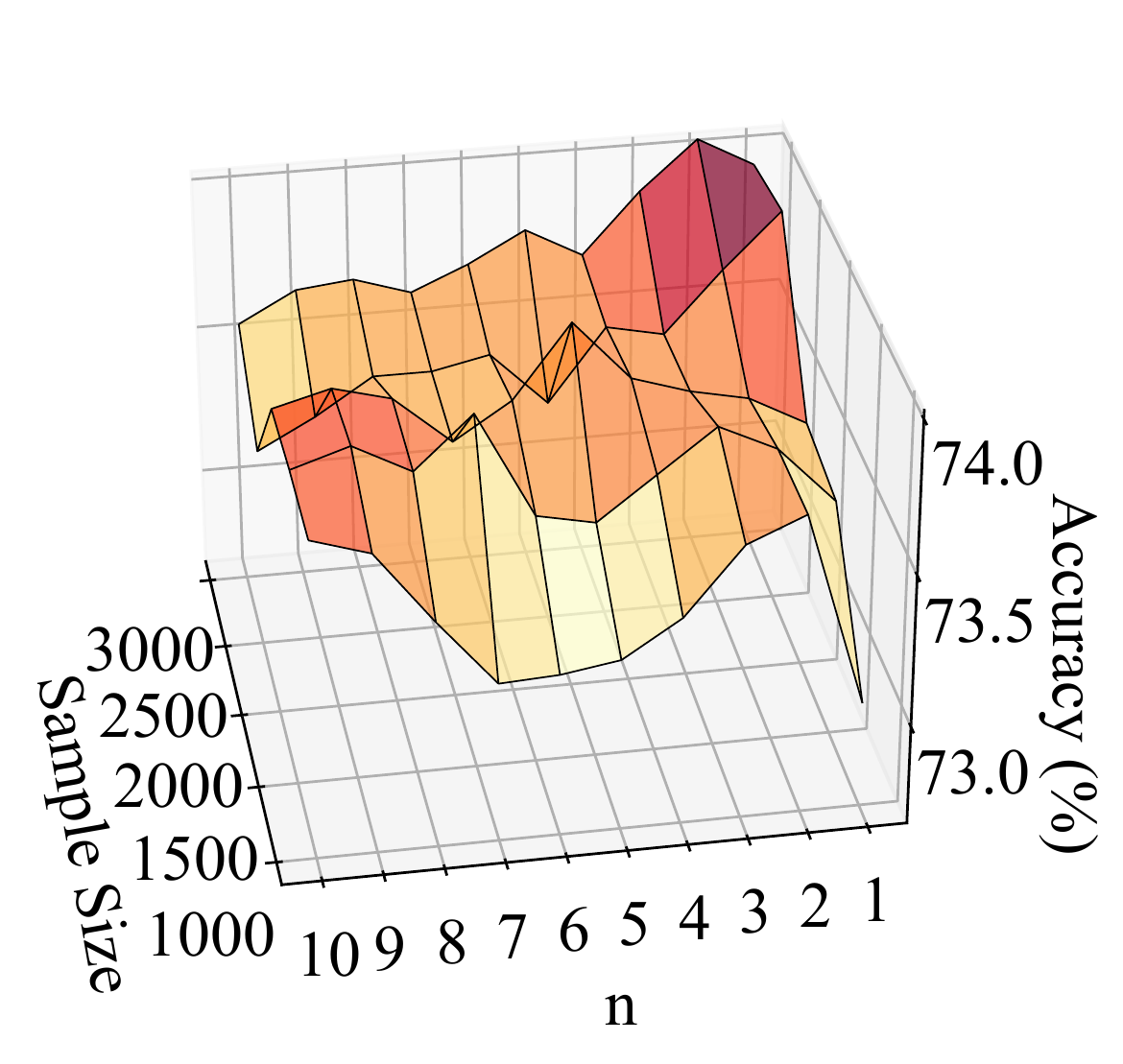}}
\hfill
\subfigure[PubMed]{\includegraphics[width=4.5cm]{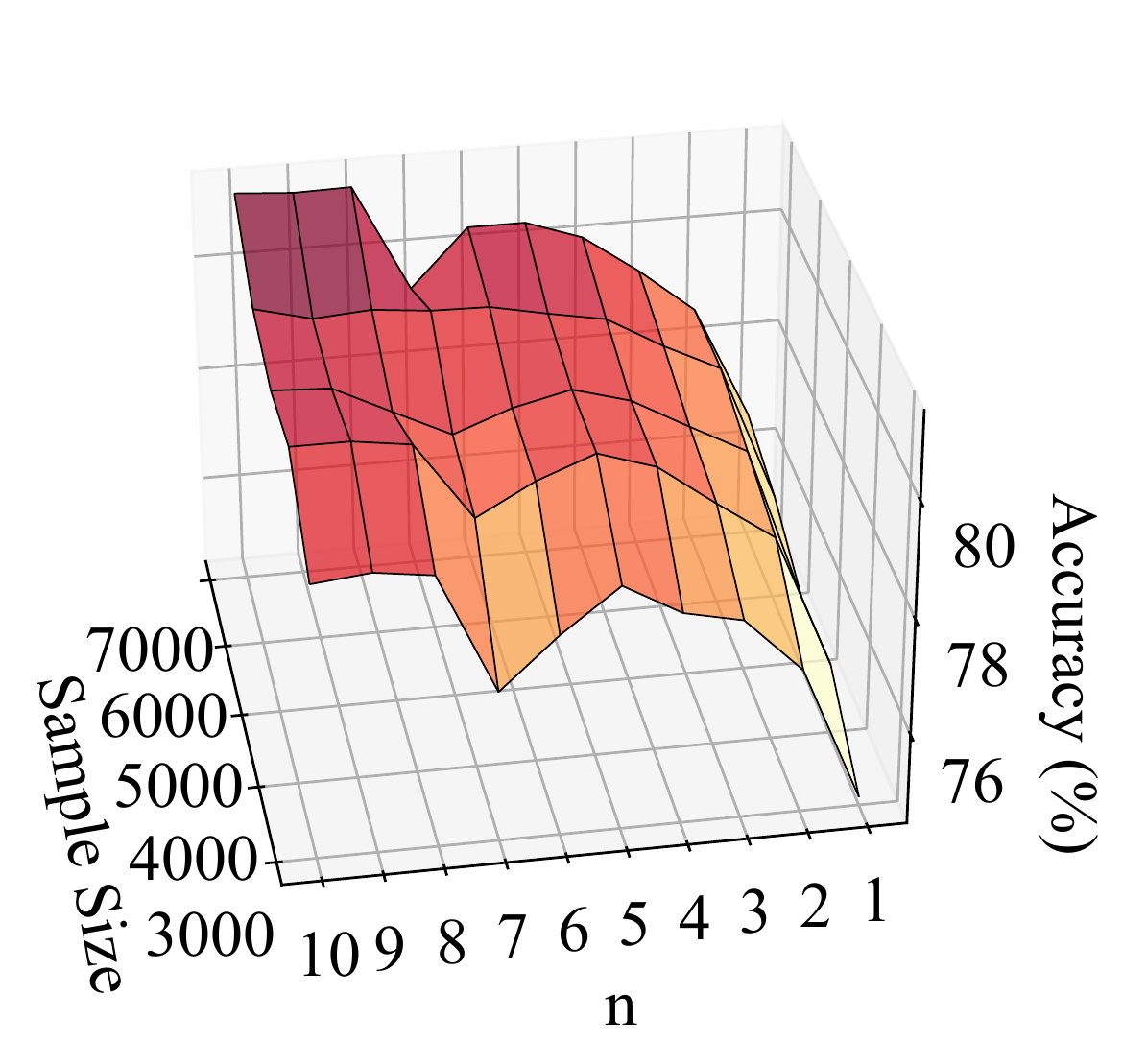}}
\caption{Joint parameter analysis of sample size and $n$-th power of $\textbf{A}$.}
\label{fig:joint parameter}
\vspace{-6.5mm}
\end{figure*}

\subsection{Ablation Study}
\label{ablation study section}
In this section, we compare the performance of our original method to its five variants: $\ourmethod_{\rm mean}$, $\ourmethod_{\rm smooth}$, $\ourmethod_{\rm sig}$, $\ourmethod_{\rm sam}$, and $\ourmethod_{\rm w/o\ p}$ on Cora, CiteSeer, and PubMed. The comparative results have been exhibited in Table \ref{tab:ablation study}.

For $\ourmethod_{\rm mean}$, the graph-level embedding is generated with the naive mean pooling approach, whereas $\ourmethod_{\rm smooth}$ uses 100-th power of $\textbf{A}$ to obtain oversmoothed embeddings which summarise all nodes information within a connected component. The method $\ourmethod_{\rm smooth}$ consistently outperforms $\ourmethod_{\rm mean}$, which indicates that establishing the contrastiveness within connected components instead of the whole graph is effective.

$\ourmethod_{\rm sig}$ uses a single encoder for both local and contextual view establishment, while $\ourmethod$ utilizes an additional auxiliary encoder for generating the contextual view. This auxiliary encoder targets to embed contextual information for better contextual representations generation. $\ourmethod_{\rm w/o\ sam}$ have no subsampling, while $\ourmethod$ uses subsampling as an augmentation to both increase the difficulty of the contrastive learning tasks and extend the scalability.
$\ourmethod_{\rm w/o\ p}$ removes the power mechanism for the contextual view, whereas $\ourmethod$ employs this mechanism to control the contextual scope of contextual representations. To shed light on the contributions of the auxiliary GNN encoder, subsampling, and the power mechanism, we compare $\ourmethod$ with $\ourmethod_{\rm w/o\ sam}$, $\ourmethod_{\rm sig}$, and $\ourmethod_{\rm w/o\ p}$ on three benchmark datasets. It is apparent that the model performance degrades without any of the three mechanisms mentioned above, which validates the effectiveness of these mechanisms. 


\subsection{Parameter Study}

\subsubsection{$\textbf{n}$-th power of $\textbf{A}$}
\label{n-th graph power}
$n$ is a key parameter used to choose the $n$-th power of $\textbf{A}$ for contextual view generation. To evaluate the effect of $n$, we run \ourmethod\ with $n$ ranging from 1 to 10. Based on Equation \ref{low bound p}, we can calculate the lower bound of $\mathcal{P}_n(i)$ for each $n$ with $d$ and $\mathcal{P}_1(i)$. Here, we assume $\mathcal{P}_1(i)$ equals to the homophily rate of the dataset.

The model performance and the value of $\mathcal{P}_n(i)$ on 5 datasets are reported in Table \ref{tab:sparsity}. Here, we adopt the best parameter settings for each dataset except for $n$. Specifically, the chosen sample size $S$ are 1000, 3000, 7000, 10000, and 5000 for Cora, CiteSeer, PubMed, Computers and Photos, respectively. 
As shown in the underlined results in Table \ref{tab:sparsity}, we can see the $n$ just before the break of strong homophily dominance (i.e., $\mathcal{P}_n(i) > 0.5$) for the lower bound of $\mathcal{P}_n(i)$ is consistent with the best $n$ for 4 out of 5 datasets. Though for Cora, the selected $n$ is not optimal, it still achieves state-of-the-art performance (84.2\%) compared with baselines.

\subsubsection{Hidden Size $D'$}
Hidden size $D'$ controls the dimensionality of hidden layers in the GNN encoder, and we change $D'$ from 512 to 8192 to see its effect on the model performance. The experiment results of the parameter analysis are shown in Figure \ref{fig:parameter}. The model performance on a larger dataset (i.e., PubMed) grows consistently when $D'$ increases, whereas the other two datasets achieve the highest performance at first and then degrade. We conjecture this is because a large $D'$ with too many parameters may overfit on small datasets.

\subsubsection{Joint Influence of Sample Size $S$ and $n$}
In this section, we explore the joint influence of sample sizes $S$ and $n$ on three datasets: Cora, CiteSeer, and PubMed. Specifically, we choose $S$ ranging from 500 to 2500 for Cora, 1000 to 3000 for CiteSeer with every increment by 500, and 3000 to 7000 for PubMed with every increment by 1000. The experiment result is presented in Figure \ref{fig:joint parameter}. For Cora and CiteSeer, we can observe that when $S$ increases, the model performance peaks at lower $n$. For PubMed, the model performance all peaks when $n$ is 10. 

This experiment result is consistent with our theoretical findings in Section \ref{sec theoretical justification}. We conjecture this phenomenon is because when $S$ increases, the average node degree $d$ would increase for the generated subgraph $\hat{\mathcal{G}}$ as the average node degree in $\hat{\mathcal{G}}$ is $\frac{S}{N}d$. According to Property \ref{theorem d}, when $d$ increases, the lower bound of $\mathcal{P}_n$ drops and leads to the break of homophily dominance easily when $n$ increases. Thus, the optimal $n$ decreases with the growth of $S$. For PubMed, from Table \ref{tab:sparsity}, we can see even with 7000 for $S$, the homophily dominance still holds when $n$ is 10. Thus, it is reasonable that the model performance all peaks at $n = 10$.


\begin{table}[tbp]
\scriptsize
  \caption{$n$-th power of $\textbf{A}$ computation time in \underline{seconds} on five datasets. Number in bracket means the number of $n$ used for the dataset. 
  }
  \label{tab:ogbn-paper}
    \centering
  \begin{tabularx}{0.9\linewidth}{ccccc}
    \toprule
    \textbf{Cora(7)} &\textbf{CiteSeer(6)} & \textbf{PubMed(10)} &\textbf{Computers(1)} &\textbf{Photos(2)}\\
    \midrule
    2.6e-04 & 2.2e-04 & 3.7e-04 & 2.7e-04 & 1.3e-03\\
  \bottomrule
  \label{graph power}
\end{tabularx}
\vspace{-9mm}
\end{table}
\subsection{Computation for $n$-th Power of $\textbf{A}$}
\label{sec:graph power}
To show the easiness of the computation for $n$-th Power of $\textbf{A}$ in \ourmethod, we run experiments on five datasets (i.e., Cora, CiteSeer, PubMed, Amazon Computers, and Amazon Photos) and report the average computation time per epoch in seconds for this operation in Table \ref{graph power}. From the table, we can observe that the computation is trivial in the training process.





\section{Conclusion}
In this paper, we \yx{propose} a novel GCL approach, namely \ourmethod. We design a cross-scale contrastiveness to fuel the GNN encoder learning process by discriminating node representations in the patch- and contextual view. The proposed power mechanism allows our method to adjust the contextual scope when building contrastiveness and ensures the contrastiveness is established within connected components. These advantages allow \ourmethod\  to conduct a more fine-grained contrastiveness than the naive pooling approach and reduce the bias of generated contextual representations. Moreover, the architecture of \ourmethod~ can be considered as a unified framework to interpret existing GCL methods. Extensive experiments validate the effectiveness of our proposed approach in node classification tasks.

\section*{Acknowledgment}
This work was partially supported by an Australian Research Council (ARC) Future Fellowship (FT210100097).


\appendix
\vspace{-1mm}
\section{Proof of Theorem 1.}
We first provide the Lemma \ref{lemma norm}, the definition of subspace, and Lemma \ref{lemma orth}:

\begin{lemma}
\textit{Given an adjacency matrix $\normalfont{\textbf{A}}$, its normalized augmented adjacency is $\normalfont{\hat{\textbf{A}} = \hat{\textbf{D}}^{-\frac{1}{2}}(\textbf{A} + \textbf{I})\hat{\textbf{D}}^{-\frac{1}{2}}}$, where $\normalfont{\hat{\textbf{D}} = \textbf{D} + \textbf{I}}$, and $\normalfont{\textbf{I}}$ is the identity matrix.  $\normalfont{\hat{\textbf{A}}}$ is symmetric with real eigenvalues $\lambda_1 \leq \lambda_2 \leq ... \lambda_N$, which have been sorted ascendingly. If the algebraic multiplicity of the largest eigenvalue $\lambda_N$ is $K \leq N$, which means the top $K$ eigenvalue are the same, we have the following properties:}
\begin{itemize}
    \item $\lambda_{N-K+1}, \lambda_{N-K+2} \cdots \lambda_{N}$ = 1, i.e., the $K$ largest eigenvalue equals to 1;
    \item $\lambda_{N-K} < 1; \lambda_{1} > -1$, i.e., the second largest eigenvalue is smaller than 1, while the smallest eigenvalue is larger than -1;
    \item The multiplicity $K$ is the number of connected components in the Graph $\mathcal{G}$ with the adjacency matrix $\normalfont{\textbf{A}}$. For each connected component, we have the eigenvector $\hat{v}_k:=\normalfont{\hat{\textbf{D}}^{\frac{1}{2}}u_k}$ corresponding to the eigenvalue $\lambda_{N-K}$, where $u_k \in \mathbb{R}^N$ indicates whether a node is belong to the $K$-th component. 
\end{itemize}
\label{lemma norm}
\end{lemma}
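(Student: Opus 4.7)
The plan is to reduce the four bullets to standard spectral facts about the symmetrically normalized Laplacian of the \emph{augmented} graph $\mathcal{G}'$ obtained from $\mathcal{G}$ by attaching a self-loop at every vertex. Concretely, I would define $\hat{\textbf{L}} = \textbf{I} - \hat{\textbf{A}}$ and observe that this is precisely the symmetrically normalized Laplacian of $\mathcal{G}'$, whose adjacency is $\textbf{A}+\textbf{I}$ and whose degree matrix is $\hat{\textbf{D}}$. Since $\hat{\textbf{A}}$ is manifestly symmetric, it has real eigenvalues, justifying the ordering $\lambda_1 \le \cdots \le \lambda_N$, and all four properties can now be read off from properties of $\hat{\textbf{L}}$.

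Next, for the bounds $\lambda_1 > -1$ and $\lambda_N \le 1$, I would invoke the standard result that a symmetrically normalized Laplacian has spectrum in $[0,2]$, with the value $2$ attained iff the graph contains a bipartite connected component. Since every vertex of $\mathcal{G}'$ carries a self-loop, no component of $\mathcal{G}'$ is bipartite, so the spectrum of $\hat{\textbf{L}}$ lies in $[0,2)$; translating through $\hat{\textbf{A}} = \textbf{I}-\hat{\textbf{L}}$ gives the spectrum of $\hat{\textbf{A}}$ in $(-1,1]$. For the eigenvector claim, I would use the quadratic form
\[
x^{\top}\hat{\textbf{L}}x \;=\; \sum_{(i,j)\in E'}\left(\tfrac{x_i}{\sqrt{\hat d_i}}-\tfrac{x_j}{\sqrt{\hat d_j}}\right)^2,
\]
which vanishes exactly when $x_i/\sqrt{\hat d_i}$ is constant on each connected component. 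Hence the indicator vector $u_k$ of the $k$-th component produces the kernel element $\hat{v}_k = \hat{\textbf{D}}^{1/2}u_k$, and a direct check $\hat{\textbf{A}}\hat{v}_k = \hat{\textbf{D}}^{-1/2}(\textbf{A}+\textbf{I})u_k = \hat{\textbf{D}}^{-1/2}\hat{\textbf{D}}u_k = \hat{v}_k$ confirms eigenvalue $1$. Linear independence across components yields that the algebraic multiplicity $K$ of $\lambda=1$ equals the number of connected components, and the strict inequality $\lambda_{N-K} < 1$ then follows because, by definition of $K$, the next-largest eigenvalue of $\hat{\textbf{L}}$ is strictly positive.

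The main obstacle is not any single computation but the careful bookkeeping between $\mathcal{G}$ and the self-looped $\mathcal{G}'$: the normalization is with respect to $\hat{\textbf{D}} = \textbf{D}+\textbf{I}$ rather than $\textbf{D}$, which is exactly why the component indicator $u_k$ must be lifted by $\hat{\textbf{D}}^{1/2}$ to yield a genuine eigenvector, and why the lower bound $\lambda_1 > -1$ needs the self-loops to rule out bipartiteness. Once that bookkeeping is handled, the four bullets are corollaries of the Perron--Frobenius theorem and the standard spectral analysis of PSD Laplacians, and no further machinery is required.
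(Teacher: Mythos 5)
The paper does not supply its own proof of this lemma; it simply cites Oono and Suzuki (2019). Your self-contained argument is correct and reproduces essentially the same standard route used there: pass to the normalized Laplacian $\hat{\textbf{L}}=\textbf{I}-\hat{\textbf{A}}$ of the self-looped graph, characterize $\ker\hat{\textbf{L}}$ via the Laplacian quadratic form as the span of the lifted component indicators $\hat{\textbf{D}}^{1/2}u_k$ (so the multiplicity $K$ of the top eigenvalue $1$ equals the number of connected components), and use the self-loops to forbid bipartiteness and hence push $\lambda_{\min}(\hat{\textbf{A}})$ strictly above $-1$. One small caution: the textbook fact that the normalized Laplacian has spectrum in $[0,2]$ with $2$ attained iff a bipartite component exists is usually stated for loop-free graphs; rather than importing it, the strict lower bound is cleaner to verify directly via the identity
\[
x^{\top}(\textbf{I}+\hat{\textbf{A}})x=\tfrac{1}{2}\sum_{i,j}(\textbf{A}+\textbf{I})_{ij}\left(\frac{x_i}{\sqrt{\hat d_i}}+\frac{x_j}{\sqrt{\hat d_j}}\right)^{2}\ge 0,
\]
whose $i=j$ terms already force $x=0$ at equality --- which is exactly the role your bipartiteness observation plays, made explicit.
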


\begin{definition}[Subspace]
We define the subspace $\mathcal{M} \in \mathbb{R}^{N \times D}$ by $\mathcal{M} := \{\normalfont{\textbf{H} \in \mathbb{R}^{N \times D}|\textbf{H} = \hat{\textbf{V}}\textbf{M}, \textbf{M} \in \mathbb{R}^{K \times D}\}}$, where $\normalfont{\hat{\textbf{V}} \in \mathbb{R}^{N \times K}}$ is a collection of eigenvectors $\hat{v}_k$ of the largest eigenvalue of $\normalfont{\hat{\textbf{A}}}$ in Theorem 1.
\end{definition}


\begin{lemma}
\label{lem:norm_adj}
\textit{Given a normalized adjacency matrix $\normalfont{\hat{\textbf{A}} \in \mathbb{R}^{N \times N}}$, a feature matrix $\normalfont{\textbf{H} \in \mathbb{R}^{N \times D}}$, the projection matrix for $\mathcal{M}$, $\normalfont{\hat{\textbf{V}}\hat{\textbf{V}}^T}$ , where $\normalfont{\hat{\textbf{V}}}$ is the normalized bases of $\mathcal{M}$, and $\normalfont{\hat{\textbf{F}}}$ is the orthogonal complement of $\normalfont{\hat{\textbf{V}}}$}, we have:
\vspace{-1mm}
\begin{equation}
    \begin{aligned}
    \normalfont{d_{\mathcal{M}}(\textbf{H})} &= \normalfont{\parallel\hat{\textbf{F}}^T\textbf{H}\parallel_F},\\
    \normalfont{d_{\mathcal{M}}(\hat{\textbf{A}}\textbf{H})} &= \normalfont{\parallel\Lambda\hat{\textbf{F}}^T \textbf{H}\parallel_F}, \\
     &\leq 
    \normalfont{\parallel \Lambda \parallel_F \parallel \hat{\textbf{F}}^T \textbf{H} \parallel_F},
\end{aligned}
\label{eq:norm_adj}
\end{equation}

\noindent where $d_{\mathcal{M}}(\cdot)$ is the distance between representations and the subspace $\mathcal{M}$. The distance between node representations $\normalfont{\textbf{H}}$ and $\mathcal{M}$ is denoted as $d_{\mathcal{M}}(\normalfont{\textbf{H}}) = \rm inf_{p\in \mathcal{M}} \parallel \textbf{H} - p \parallel_F$. $\Lambda$ denotes all eigenvalues excluding the $K$ largest eigenvalues, and $\parallel \cdot \parallel_F$ represents the Frobenius norm. 
\label{lemma orth}
\end{lemma}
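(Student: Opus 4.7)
The plan is to unpack the distance $d_{\mathcal{M}}$ via orthogonal projection, exploit that $\hat{\mathbf{A}}$ acts diagonally once we decompose $\mathbb{R}^N$ into $\mathcal{M}$ and its orthogonal complement, and then finish with a standard submultiplicative inequality. Since $\mathcal{M}$ is the column span of $\hat{\mathbf{V}}$ (with $\hat{\mathbf{V}}^T\hat{\mathbf{V}}=\mathbf{I}_K$), the orthogonal projector onto $\mathcal{M}$ is $\hat{\mathbf{V}}\hat{\mathbf{V}}^T$ and the projector onto $\mathcal{M}^\perp$ is $\hat{\mathbf{F}}\hat{\mathbf{F}}^T$, with $\hat{\mathbf{F}}^T\hat{\mathbf{F}}=\mathbf{I}_{N-K}$ and $\hat{\mathbf{F}}^T\hat{\mathbf{V}}=\mathbf{0}$.

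For the first equality, the closest point in $\mathcal{M}$ to a column of $\mathbf{H}$ is its projection, so the minimizing residual (in Frobenius norm over all columns simultaneously) is $(\mathbf{I}-\hat{\mathbf{V}}\hat{\mathbf{V}}^T)\mathbf{H} = \hat{\mathbf{F}}\hat{\mathbf{F}}^T\mathbf{H}$. Using that $\hat{\mathbf{F}}$ has orthonormal columns, $\|\hat{\mathbf{F}}\hat{\mathbf{F}}^T\mathbf{H}\|_F = \|\hat{\mathbf{F}}^T\mathbf{H}\|_F$, yielding $d_{\mathcal{M}}(\mathbf{H})=\|\hat{\mathbf{F}}^T\mathbf{H}\|_F$.

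For the second equality, I would spectrally decompose $\hat{\mathbf{A}} = \hat{\mathbf{V}}\hat{\mathbf{V}}^T + \hat{\mathbf{F}}\mathbf{\Lambda}\hat{\mathbf{F}}^T$, which follows from Lemma~\ref{lemma norm} (the top $K$ eigenvalues equal $1$, so that block contributes $\hat{\mathbf{V}}\mathbf{I}_K\hat{\mathbf{V}}^T$, while the remaining eigenvalues are collected in the diagonal $\mathbf{\Lambda}$). Applying the first equality to $\hat{\mathbf{A}}\mathbf{H}$ gives $d_{\mathcal{M}}(\hat{\mathbf{A}}\mathbf{H}) = \|\hat{\mathbf{F}}^T\hat{\mathbf{A}}\mathbf{H}\|_F$, and by orthogonality $\hat{\mathbf{F}}^T\hat{\mathbf{V}}=\mathbf{0}$ together with $\hat{\mathbf{F}}^T\hat{\mathbf{F}}=\mathbf{I}$, this collapses to $\|\mathbf{\Lambda}\hat{\mathbf{F}}^T\mathbf{H}\|_F$. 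The final inequality is the standard submultiplicative bound $\|\mathbf{\Lambda}\mathbf{M}\|_F \leq \|\mathbf{\Lambda}\|_F\|\mathbf{M}\|_F$ applied to $\mathbf{M}=\hat{\mathbf{F}}^T\mathbf{H}$.

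The only subtle step is verifying the decomposition $\hat{\mathbf{A}} = \hat{\mathbf{V}}\hat{\mathbf{V}}^T + \hat{\mathbf{F}}\mathbf{\Lambda}\hat{\mathbf{F}}^T$: this requires that $\hat{\mathbf{V}}$ and $\hat{\mathbf{F}}$ together form an orthonormal eigenbasis of $\hat{\mathbf{A}}$, which in turn relies on $\hat{\mathbf{A}}$ being symmetric (granted by Lemma~\ref{lemma norm}) and on $\hat{\mathbf{V}}$ being chosen as the orthonormal eigenvectors of the unit eigenvalue. Note that the eigenvectors given in Lemma~\ref{lemma norm} are $\hat{v}_k = \hat{\mathbf{D}}^{1/2}u_k$, which are orthogonal but not automatically unit-norm, so I would first normalize them to obtain the orthonormal $\hat{\mathbf{V}}$ used in the subspace definition. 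Once this bookkeeping is done, all three claims follow from the calculations above.
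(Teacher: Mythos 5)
Your proof is correct, but it is worth noting that the paper does not actually supply its own argument for this lemma: the proof environment in the Appendix simply attributes Lemma~2 to \cite{huang2020tackling} (``based on the notion of projection'') and then immediately moves on to apply Equation~(\ref{eq:norm_adj}) to derive Theorem~1. So you are not so much taking a different route as filling in a gap the paper leaves as a citation.

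Your three steps are the natural ones and they check out. For the first equality, the minimizer over $\mathcal{M}$ is the orthogonal projection $\hat{\mathbf{V}}\hat{\mathbf{V}}^T\mathbf{H}$, the residual is $\hat{\mathbf{F}}\hat{\mathbf{F}}^T\mathbf{H}$, and orthonormality of the columns of $\hat{\mathbf{F}}$ gives $\|\hat{\mathbf{F}}\hat{\mathbf{F}}^T\mathbf{H}\|_F = \|\hat{\mathbf{F}}^T\mathbf{H}\|_F$. For the second, the identity $\hat{\mathbf{F}}^T\hat{\mathbf{A}} = \mathbf{\Lambda}\hat{\mathbf{F}}^T$ does require that $\hat{\mathbf{F}}$ be chosen as the remaining eigenvectors of $\hat{\mathbf{A}}$ (not just an arbitrary orthonormal basis of $\mathcal{M}^\perp$); you flag this correctly, and it is consistent with the lemma's phrasing that $\mathbf{\Lambda}$ collects the eigenvalues excluding the top $K$. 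Your observation that $\hat{v}_k = \hat{\mathbf{D}}^{1/2}u_k$ are mutually orthogonal (disjoint supports) but not unit-norm, so that $\hat{\mathbf{V}}$ must be the normalized versions for $\hat{\mathbf{V}}\hat{\mathbf{V}}^T$ to be an orthogonal projector, is a genuine subtlety the paper glosses over. The final step is just Frobenius-norm submultiplicativity.

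One small remark beyond the scope of this lemma: in the paper's subsequent derivation of Theorem~1, the step $\|\mathbf{\Lambda}\|_F \leq \lambda$ is not literally correct when $\mathbf{\Lambda}$ has more than one nonzero entry; the bound one actually wants is $\|\mathbf{\Lambda}\hat{\mathbf{F}}^T\mathbf{H}\|_F \leq \|\mathbf{\Lambda}\|_2\,\|\hat{\mathbf{F}}^T\mathbf{H}\|_F$ with the spectral norm $\|\mathbf{\Lambda}\|_2 = \lambda$. Your lemma-level proof is unaffected, but if you want the downstream theorem to go through cleanly you would state the sharper $\|\mathbf{\Lambda}\mathbf{M}\|_F \leq \|\mathbf{\Lambda}\|_2\|\mathbf{M}\|_F$ rather than the Frobenius-times-Frobenius bound.
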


\begin{proof}
Lemma \ref{lemma norm} has been proved by  \cite{oono2019graph} to show augmented spectral property of an augmented adjacency, while Lemma 2 has been proved by \cite{huang2020tackling} based on the notion of projection. A projection matrix can project a given vector or matrix onto subspace to obtain the projected vector or matrix. By utilising Equation (\ref{eq:norm_adj}) in Lemma \ref{lem:norm_adj}, we will have the following derivation:
\vspace{-1mm}
\begin{equation}
    \begin{aligned}
    d_{\mathcal{M}}(\hat{\textbf{A}}^{n}\textbf{H}) &= \parallel\Lambda^{n}\hat{\textbf{F}}^T \textbf{H}\parallel_F,\\
     &= \parallel\Lambda\Lambda^{n-1}\hat{\textbf{F}}^T \textbf{H}\parallel_F,\\
     &\leq \parallel\Lambda\parallel_F\parallel\Lambda^{n-1}\hat{\textbf{F}}^T \textbf{H}\parallel_F, \\
     &\leq \parallel\Lambda\parallel_F d_{\mathcal{M}}(\hat{\textbf{A}}^{n-1}\textbf{H}), \\
     &\leq \lambda d_{\mathcal{M}}(\hat{\textbf{A}}^{n-1}\textbf{H}).
    \end{aligned}
\end{equation}

Here, we get the inequality shown in Theorem 1.

\end{proof}

\vspace{-7.5mm}


\bibliographystyle{IEEEtran}
\bibliography{GCLGP}

\end{document}